\newtheorem{theorem}{Theorem}
\newtheorem{proposition}[theorem]{Proposition}
\newtheorem{lemma}[theorem]{Lemma}
\theoremstyle{definition}
\newtheorem{definition}[theorem]{Definition}
\newtheorem{remark}[theorem]{Remark}
\title{On the space-time expressivity of ResNets}
\author{Johannes M{\"u}ller\\
Max Planck Insitute for Mathematics in the Sciences\\
\texttt{jmueller@mis.mpg.de}
}
\begin{document}

\maketitle

\begin{abstract}
Residual networks (ResNets) are a deep learning architecture that substantially improved the state of the art performance in certain supervised learning tasks. Since then, they have received continuously growing attention. ResNets have a recursive structure \(x_{k+1} = x_k + R_k(x_k)\) where \(R_k\) is a neural network called a residual block. This structure can be seen as the Euler discretisation of an associated ordinary differential equation (ODE) which is called a neural ODE. Recently, ResNets were proposed as the space-time approximation of ODEs which are not of this neural type. To elaborate this connection we show that by increasing the number of residual blocks as well as their expressivity the solution of an arbitrary ODE can be approximated in space and time simultaneously by deep ReLU ResNets. Further, we derive estimates on the complexity of the residual blocks required to obtain a prescribed accuracy under certain regularity assumptions.
\end{abstract}

\section{Introduction}

Various neural network based methods have been proposed for the numerical analysis of partial differential equations (PDEs) \cite[see][]{lee1990neural,dissanayake1994neural,takeuchi1994neural,lagaris1998artificial} as well as for ordinary differential equations (ODEs) \cite[see][]{meade1994numerical,meade1994solution,lagaris1998artificial,breen2019newton}. In subsequent years those methods where improved and extended to a variety of settings and we refer to \cite{yadav2015introduction} for an overview of neural network based methods for ODEs. Recently, deep networks have successfully been applied to the numerical simulation of stationary and non stationary PDEs by \cite{weinan2018deep} and \cite{weinan2017deep, han2018solving} respectively; a list of further improvement of those methods can be found in \cite{grohs2019spacetime}. The promising empirical performance of those approaches raised interest in theoretical guarantees and led to a number of error estimates \cite[see][]{jentzen2018proof, han2018convergence, grohs2018proof, elbrachter2018dnn, berner2018analysis, reisinger2019rectified, kutyniok2019theoretical}. In particular it can be shown that neural networks are capable of approximating the solutions of a number of PDEs without suffering from the curse of dimensionality. However, it should be noted that those works only provide estimates for the spatial error at a fixed time rather than the approximation error in space and time simultaneously.

Compared to the case of PDEs the analysis of the approximation error for ODEs is less complete. Although a priori and a posteriori error estimates are present in the literature \cite[see][respectively]{filici2008neural,filici2010error} they only consider the solution for a single initial value rather than the full space-time solution
\begin{equation}\label{spacetimesol}
     x(0, y) = y, \quad \partial_t x(t, y) = f(t, x(t, y)) \quad \text{for all } t, y
\end{equation}
to the right hand side \(f\). Recently, \cite{grohs2019spacetime} established an approximation result in space-time and showed that Euler discretisations of a certain class of neural ODEs can be approximated by neural networks with error decreasing exponentially in the complexity of the networks. Those are the first space-time error estimates in the study of neural network based methods for either PDEs or ODEs. Yet, in order to obtatin space-time error estimates for the solution of an ODE one has to bound the approximation error of the class of Euler discretisations to the solution of this ODE. Such estimates are implied by our main result Theorem \ref{uniapproxcomplex} concerning the approximation of space-time solutions with residual networks. However, there is a further motivation in the study of the approximation capabilities of residual networks that we will present now.

\subsection*{Residual networks and dynamical systems}

Residual networks (ResNets) make use of skip connections which were introduced to overcome difficulties in the training of deep neural networks in supervised learning tasks. Rather than using the iterative scheme \(x_{l+1} \coloneqq\rho(A_lx_l+b_l)\) like a traditional feedforward network, ResNets copy the input \(x_l\) to some subsequent layer, in the easiest case to the following layer which leads to
\begin{equation}\label{resnet}
x_{l+1} \coloneqq x_l + \rho(A_lx_l+b_l) \quad \text{for } l = 0, \dots, L-1.
\end{equation}
Obviously, this is only well defined if the dimensions of all states \(x_l\) agree. It was shown in \cite{he2016deep} that ResNets  are superior to traditional feedforward neural networks in some image classification tasks. It has been pointed out in \cite{haber2018learning} that the recursive structure \eqref{resnet} can be interpreted as the explicit Euler discretisation of the ordinary differential equation
\begin{equation}\label{ODEresnet}
\partial_t x(t) = \rho\big(A(t)x(t) + b(t)\big).
\end{equation}
Building on this observation \cite{haber2017stable} transferred the knowledge about the stability of ODEs to the stability of forward propagation in ResNets and \cite{lu2017beyond} introduced neural networks corresponding to other numerical schemes for ODEs like implicit Euler or Runge-Kutta schemes. Further, \cite{chen2018neural} replaced ResNets by ODEs in supervised learning tasks and achieved state of the art performance with fewer parameters. A rigorous justification for this approach using the notion of \(\Gamma\)-convergence was established in \cite{thorpe2018deep}. Lately, ResNets have been proposed in \cite{rousseau2019residual} as an approximation of space-time solutions of a much more general class of ODEs than \eqref{ODEresnet} which always admits non decreasing solutions. Further, this was applied to the problem of diffeomorphic image registration which can be interpreted as a controlled ODE problem.

The expressivity of ResNets was studied in different ways. It was shown by \cite{lin2018resnet} that ResNets are able to approximate arbitrary \(L^p\)-functions and \cite{cuchiero2019deep} showed that ResNets can take prescribed values on arbitrary point sets. Both works consider the input-output mapping \(x_0\mapsto x_L\) induced by a ResNet. Similarly, \cite{dupont2019augmented} and \cite{zhang2019approximation} studied the approximation capabilities of neural ODEs at final time. Although many works perceive ResNets as discrete dynamical systems \cite[see][and subsequent work]{weinan2017proposal, liu2019selection} an analysis of the expressivity of their dynamics is still absent.

\subsubsection*{Contributions}

We study the expressivity of the dynamics of ResNets and show that ResNets can approximate solutions of arbitrary ODEs in space-time. This includes
the solution of the control problem ResNets where proposed for in \cite{rousseau2019residual}. More precisely, we make the following contributions:
\begin{enumerate}
    \item \emph{Universality}: ResNets can approximate solutions of arbitrary ODEs uniformly in space and time simultaneously (see Theorem \ref{uniapproxflow}).
    \item \emph{Complexity bounds}: Assume that the right hand side \(f\) is Lipschitz continuous. Then the solution to this ODE can be approximated with (local) error \(\mathcal O(n^{-1})\) through ResNets with \(n\) residual blocks which have \(\mathcal O(r_n^d n^{d})\) neurons; here, \((r_n)_{n\in\mathbb N}\subseteq(0, \infty)\) is an arbitrary sequence diverging to \(+\infty\) and \(d\) is the dimension of the ODE (see Theorem \ref{uniapproxcomplex}).
\end{enumerate}

\section{Definitions and notation}

Let for the remainder \(d, m, L\) be natural numbers. Further, we consider tupels 
\[\theta = \left( (A_1, b_1), \dots, (A_L, b_L)\right)\]
of matrix-vector pairs where \(A_l\in\mathbb R^{N_{l}\times N_{l-1}}\) and \(b_l\in\mathbb R^{N_l}\) and \(N_0 = d, N_L = m\). Every matrix vector pair \((A_l, b_l)\) induces an affine linear transformation that we denote by \(T_l\colon \mathbb R^{N_{l-1}} \to\mathbb R^{N_l}\). The \emph{neural network with parameters} \(\theta\) and with respect to some \emph{activation function} \(\rho\colon\mathbb R\to\mathbb R\) is the function
\[R=R_\theta\colon\mathbb R^d\to\mathbb R^m, \quad x\mapsto T_L(\rho(T_{L-1}(\rho(\cdots \rho(T_1(x)))))),\]
where \(\rho\) is applied componentwise. We call \(d\) the \emph{input} and \(m\) the \emph{output dimension}, \(L\) the \emph{depth} and \(N(\theta)\coloneqq\sum_{l=0}^L N_l\) the \emph{number of neurons} of the network. If we have \(f=R_\theta\) for some \(\theta\) we say that the function \(f\) is \emph{expressed} by the neural network.

In the following we restrict ourselves to the case of a specific activation function which is not only commonly used in practice \cite[see][]{ramachandran2017searching} but also exhibits nice theoretical properties \cite[see][]{arora2016understanding,petersen2018topological}. The \emph{rectified linear unit} or \emph{ReLU activation function} is defined via \(x\mapsto \max\left\{ 0, x\right\}\) and we call networks with this activation \emph{ReLU networks}.

Finally, we introduce the notion of residual networks. In order to interpret ResNets as functions in space-time we define them to be Euler discretisation of a certain class of ODEs which are linearly interpolated in time. It is important to note that this might differ from other definitions of residual networks present in the literature.

\begin{definition}[Residual network]
Let \(\theta = (\theta_1, \dots, \theta_n)\) be a tupel of parameters of neural networks with input and output dimension \(d\). Let \(R_1, \dots, R_{n}\colon\mathbb R^d\to\mathbb R^d\) denote the neural networks with parameters \(\theta_1, \dots, \theta_{n}\) and some activation \(\rho\). We refer to those networks as \emph{residual blocks}.
The \emph{residual network} or \emph{ResNet} \(x^n\colon[0, 1]\times\mathbb R^d\to\mathbb R^d\) with parameters \(\theta = (\theta_1, \dots, \theta_n)\) and with respect to the activation function \(\rho\) is defined via
\[x^n(0, y) \coloneqq y, \quad x^n(t_{k+1}, y)\coloneqq x^n(t_k, y) + n^{-1}\cdot R_{k+1}(x^n(t_k, y))\]
for \(k = 0, \dots, n-1\) and linearly in between. In the remainder, we will only consider ResNets with respect to the ReLU activation function and call those \emph{ReLU ResNets}.
\end{definition}

\section{Presentation of the main results}

Now we have introduced enough notation to state our main results precisely.

\begin{theorem}[Space-time approximation with ResNets]\label{uniapproxflow}
Let \(d\in\mathbb N, f\in L^1([0, 1]; \mathcal C^{0, 1}_b(\mathbb R^d; \mathbb R^d))\)\footnote{Up to a technical measurability property (Bochner measurability) this means that \(f(t, \cdot)\) is bounded and Lipschitz continuous for almost all \(t\) and that the uniform norm and Lipschitz constants are integrable.} 
and let \(x\) be the space-time solution\footnote{see \eqref{spacetimesol}; we use the notion of weak solutions introduced in the appendix; if \(f\) is continuous this coincides with the classical notion of a solution; further, the ODE is globally well posed for this class of right hand sides.} of the ODE with right hand side \(f\). Then for every compact set \(K\subseteq\mathbb R^d\) and \(\varepsilon>0\) there is a ReLU ResNet \(\tilde x\) such that
\[ \left\lVert \tilde x(t, y) - x(t, y) \right\rVert \le \varepsilon \quad \text{for all } t\in [0, 1], y\in K.\]
\end{theorem}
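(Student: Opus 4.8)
The plan is to approximate the true flow in two stages: first by a piecewise-Euler polygonal flow, then by a ReLU ResNet whose residual blocks approximate the associated velocity fields. For the first stage, fix the compact set $K$ and observe that by the well-posedness and Lipschitz assumptions on $f$ the solution map $y\mapsto x(t,y)$ propagates $K$ into a bounded set $K'\subseteq\mathbb R^d$ uniformly in $t\in[0,1]$; enlarge $K'$ slightly to absorb the approximation errors we are about to incur. Classical ODE theory (or a direct Grönwall estimate) gives that the explicit Euler scheme with $n$ steps, $y_{k+1}=y_k+n^{-1} g_{k}(y_k)$ where $g_k(\cdot)=n\int_{t_k}^{t_{k+1}}f(s,\cdot)\,ds$ is the time-averaged right hand side on the $k$-th subinterval, converges uniformly on $K$ to $x(\cdot,\cdot)$ as $n\to\infty$; I would pick $n$ large enough that this Euler polygon is within $\varepsilon/2$ of $x$ in the stated norm, using that $f\in L^1([0,1];\mathcal C^{0,1}_b)$ so the $g_k$ are uniformly bounded and uniformly Lipschitz on $\mathbb R^d$ with constants controlled by the $L^1$-norms of $\|f(s,\cdot)\|_\infty$ and $\mathrm{Lip}(f(s,\cdot))$.

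For the second stage, I would replace each averaged field $g_k$ by a ReLU network $R_{k+1}$ on the relevant bounded region. Since $g_k$ is Lipschitz and $d$ is fixed, the standard universal approximation theorem for ReLU networks yields networks $R_{k+1}$ with $\sup_{z\in K''}\|R_{k+1}(z)-g_k(z)\|\le\delta$ for a region $K''$ large enough to contain all iterates of the perturbed scheme — here one must be slightly careful to fix $K''$ first (it depends only on the uniform bounds, not on $\delta$), then choose $\delta$. Defining $\tilde x$ as the ResNet with blocks $R_1,\dots,R_n$, a discrete Grönwall argument shows the per-step errors $\delta$ accumulate at most like $n\cdot\delta\cdot e^{C}$ where $C$ bounds the sum of the step-wise Lipschitz constants, so choosing $\delta$ of order $\varepsilon/(2n e^{C})$ makes $\tilde x$ within $\varepsilon/2$ of the Euler polygon at the nodes $t_k$; linear interpolation in between is controlled because both $\tilde x$ and $x$ are Lipschitz in $t$ on $[0,1]$ (the former because the $R_{k+1}$ are bounded on $K''$, the latter from the $L^1$-integrability of $\|f(s,\cdot)\|_\infty$), so refining $n$ if necessary keeps the interpolation error below a further $O(1/n)$ term absorbed into $\varepsilon/2$. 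Combining the two stages via the triangle inequality gives the claimed bound $\le\varepsilon$ on $[0,1]\times K$.

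The main obstacle I anticipate is the bookkeeping around the domains: the networks $R_{k+1}$ only approximate $g_k$ on a bounded set, yet the iterates of the approximate scheme could in principle wander outside it, so one needs an a priori invariant region $K''$ valid for all sufficiently fine discretisations and all perturbations below a threshold, established before the accuracy $\delta$ is selected. A second, more technical point is handling the merely $L^1$-in-time regularity of $f$: the time-averaged fields $g_k$ are the right objects, and one must check their uniform bounds and Lipschitz constants are controlled by $\|f\|_{L^1([0,1];\mathcal C^{0,1}_b)}$ uniformly in $n$ and $k$, and that the Euler polygon for these averaged fields indeed converges to the weak solution $x$ — this is where the appendix's notion of weak solution enters. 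Everything else (discrete Grönwall, universal approximation in fixed dimension $d$, the time-interpolation estimate) is routine; quantifying the network sizes needed for a given $\delta$ is precisely the refinement carried out in Theorem~\ref{uniapproxcomplex}.
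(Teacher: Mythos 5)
Your proposal is essentially correct and reaches the result, but the decomposition differs from the paper's. The paper first approximates $f$ itself in the Bochner space $L^1([0,1];\mathcal C^{0,1}_b)$ by a function that is piecewise \emph{constant} in time (using density of continuous functions and the continuity of the solution map, itself a Gr\"onwall estimate), after which $f$ is genuinely uniformly bounded, pointwise evaluation $f(t_i,\cdot)$ is unambiguous, and the ResNet can be analysed as a perturbed Euler scheme via an integral-equation (continuous) Gr\"onwall lemma. You instead keep the original $f$ and discretise directly with the time-averaged fields $g_k=n\int_{t_k}^{t_{k+1}}f(s,\cdot)\,ds$, then run a discrete Gr\"onwall. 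Your route avoids the Bochner-density step and is arguably more self-contained; the paper's route buys cleaner bookkeeping (a single uniform bound $c$ on the right-hand side, hence an invariant ball $\overline{B_{N+c}}$ for both the flow and the ResNet, and the ability to reuse the same residual block $k$ times on each constancy interval, which is what later yields the weight-sharing remark). One point in your write-up is actually false as stated and should be repaired: the $g_k$ are \emph{not} uniformly bounded or uniformly Lipschitz in $k$ and $n$ when $f$ is merely $L^1$ in time (take $f(s,x)=s^{-1/2}v$: then $\lVert g_0\rVert_\infty=2\sqrt n$). What is controlled are the sums $n^{-1}\sum_k\lVert g_k\rVert_\infty\le\lVert f\rVert_{L^1}$ and $n^{-1}\sum_k\operatorname{Lip}(g_k)\le\lVert h\rVert_{L^1}$, and these suffice both for the invariant region $K''$ (via telescoping) and for the discrete Gr\"onwall product $\prod_k(1+n^{-1}\operatorname{Lip}(g_k))\le e^{\lVert h\rVert_{L^1}}$; similarly, $x$ is only uniformly absolutely continuous in $t$, not Lipschitz, but the resulting modulus $\sup_k\int_{t_k}^{t_{k+1}}\lVert f(s,\cdot)\rVert_\infty\,ds\to0$ still kills the interpolation error. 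With the bounds rephrased in this summed form your argument goes through.
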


The proof is based on the observation that \(f\) can be approximated by functions that are piecewise constant in time on the intervals \([k/n, (k+1)/n)\). By standard continuity results for the solution operator of ODEs the approximation also holds for the associated space-time solutions and thus one can without loss of generality assume that \(f\) is piecewise constant. However, if \(f\) is merely integrable in time it is not possible to bound the number of constant regions. Hence, one can not bound the number of the residual blocks that is required in order to achieve a prescribed approximation accuracy under no temporal regularity assumptions. Nevertheless, in the proof of the result the approximation in space and in time are clearly separated and in fact the constructed residual blocks share weights depending on the temporal regularity of the right hand side. Hence, the same arguments can be used to establish bounds on the complexity of the residual networks like the following. 

\begin{theorem}[Space-time approximation with complexity bounds]\label{uniapproxcomplex}
Let \(d\in\mathbb N\), \((r_n)_{n\in\mathbb N}\subseteq(0, \infty)\) be a sequence diverging to \(+\infty\) and let \(f\colon[0, 1]\times\mathbb R^d\to\mathbb R^d\) be a bounded and 
Lipschitz continuous function. Let \(x\colon[0, 1]\times\mathbb R^d\to\mathbb R^d\) be the space-time solution of the ODE with right hand side \(f\). Then for every \(n\in\mathbb N\) there is a ReLU ResNet \(x^n\) with parameters \(\theta^n = (\theta^n_1, \dots, \theta^n_n)\) such that the following are satisfied:
\begin{enumerate}
\item \emph{Approximation:} For every compact set \(K\subseteq\mathbb R\) it holds
\[\sup_{t\in[0, 1], y\in K}\left\lVert x^n(t, y) - x(t, y) \right\rVert \in \mathcal O(n^{-1}).\]
\item \emph{Complexity bounds:} Every residual block \(\theta^n_k\) has depth \(\big\lceil \log_2((d+1)!)\big\rceil +2\) and satisfies
\[N(\theta^n_k) \in \mathcal O\left(r_n^d n^{d}\right).\]
Finally, all but \(\mathcal O\left(r_n^d n^{d}\right)\) weights can be fixed.
\end{enumerate}
\end{theorem}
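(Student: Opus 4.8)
\section*{Proof proposal}

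The plan is to realise the ResNet as a ReLU-expressible version of the explicit Euler scheme for \(\partial_t x = f(t, x)\) and to split the argument into a standard numerical part and a more delicate network-construction part. Write \(h = n^{-1}\), \(t_k = k/n\), \(M \coloneqq \lVert f\rVert_\infty\) and let \(L\) be a Lipschitz constant of \(f\). \emph{Step 1 (numerical error of Euler).} Let \(\hat x^n\) be the classical Euler polygon, \(\hat x^n(0,y) = y\), \(\hat x^n(t_{k+1},y) = \hat x^n(t_k,y) + h\,f(t_k, \hat x^n(t_k,y))\), linearly interpolated in \(t\). For \(y\in K\subseteq \overline B(0,\rho)\) the iterates stay in the fixed ball \(\overline B(0, \rho+M)\), the solution \(x(\cdot, y)\) is \(C^{1,1}\) in \(t\), and the usual local-error-plus-discrete-Gronwall estimate gives \(\sup_{t\in[0,1],\,y\in K}\lVert \hat x^n(t,y) - x(t,y)\rVert \le C(L,M)\,h = \mathcal O(n^{-1})\); the time interpolation adds only \(\mathcal O(h)\) since both polygons move at speed \(\mathcal O(M)\) on each subinterval.

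\emph{Step 2 (perturbing the vector field).} The ResNet \(x^n\) is exactly the Euler polygon obtained by replacing \(f(t_k,\cdot)\) in the \(k\)-th step by the residual block \(R_{k+1}\). Suppose we can build \(R_{k+1}\) so that \(\sup_{z\in Q_n}\lVert R_{k+1}(z) - f(t_k, z)\rVert \le \eta_n\) on the box \(Q_n \coloneqq [-r_n, r_n]^d\), with \(\lVert R_{k+1}\rVert \le M\) on \(Q_n\). A second discrete Gronwall argument bounds \(\sup_{t,\,y\in K}\lVert x^n(t,y) - \hat x^n(t,y)\rVert\) by \(\eta_n (e^L-1)/L\), and a one-line induction shows that, once \(n\) is large enough that \(\overline B(0,\rho+M)\subseteq Q_n\), the ResNet iterates started in \(K\) never leave \(Q_n\), so the bound on \(R_{k+1} - f(t_k,\cdot)\) is only ever invoked where it holds. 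Hence \(\sup_{t,\,y\in K}\lVert x^n - x\rVert = \mathcal O(n^{-1} + \eta_n)\), and it remains to realise each \(R_{k+1}\) with \(\eta_n = \mathcal O(n^{-1})\) and the prescribed shape.

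\emph{Step 3 (constructing the residual blocks).} For each \(k\), take \(R_{k+1}\) to be the continuous piecewise-linear nodal interpolant of \(f(t_k,\cdot)\) on the Kuhn triangulation of \(Q_n\) with cells of side \(\Theta(n^{-1})\); since \(f\) is Lipschitz, a convex-combination estimate gives \(\lVert R_{k+1} - f(t_k,\cdot)\rVert_{\infty, Q_n} \le L\sqrt d\cdot\Theta(n^{-1}) = \mathcal O(n^{-1})\), as needed, and the interpolant inherits the bound \(M\). This interpolant is expressed by a ReLU network as follows: locating the active simplex at a point reduces to determining its integer cell and the descending order of the \(d\) fractional coordinates, on which cell the interpolant has a closed form affine in the sorted coordinates; the comparison-based selection among the \(d!\) Kuhn simplices of a cell is carried out by \(\lceil \log_2((d+1)!)\rceil\) comparator layers, and two further affine layers handle the input normalisation and the final linear combination of nodal values, giving depth \(\lceil \log_2((d+1)!)\rceil + 2\), independent of \(n\) and \(k\). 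The mesh geometry and the comparator subnetwork are identical for all blocks and can be hard-coded; the only free parameters are the nodal values \(f(t_k, v)\in\mathbb R^d\), which enter linearly. Over \(Q_n = [-r_n,r_n]^d\) at resolution \(\Theta(n^{-1})\) there are \(\Theta\big((2 r_n n)^d\big) = \mathcal O(r_n^d n^d)\) cells, hence \(\mathcal O(r_n^d n^d)\) neurons and \(\mathcal O(d\, r_n^d n^d) = \mathcal O(r_n^d n^d)\) free weights among a total of \(\mathcal O(r_n^d n^d)\) weights. Combining Steps 1--3 yields the theorem.

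\emph{Expected obstacle.} The two Gronwall estimates are routine; the real work is Step 3 --- the explicit, weight-shared ReLU realisation of the simplicial nodal interpolant achieving the tight depth \(\lceil \log_2((d+1)!)\rceil + 2\) and a number of free weights matching the number of mesh \emph{nodes} rather than the number of network \emph{edges}. A secondary subtlety is the role of the arbitrary sequence \(r_n\to\infty\): because a single sequence \((x^n)\) must serve \emph{every} compact \(K\), the meshed box must grow without bound, and letting its half-side be \(r_n\) (growing arbitrarily slowly) is precisely what both guarantees \(\overline B(0,\rho+M)\subseteq Q_n\) eventually for each fixed \(K\) and produces the factor \(r_n^d\) in the complexity bound.
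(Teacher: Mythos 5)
Your Steps 1--2 are sound and match the paper's strategy: the paper also reduces everything to a perturbed-Euler/Gr\"onwall estimate (it does so in a single integral-equation Gr\"onwall comparing the ResNet directly to $x$, rather than your two-stage comparison through the exact Euler polygon, but this is cosmetic), and it handles the role of $r_n$ exactly as you describe --- the blocks approximate $f(t_k,\cdot)$ only on $[-r_n,r_n]^d$, the trajectories from a fixed ball $\overline{B_N}$ stay in $\overline{B_{N+c}}$ because the blocks inherit the global bound $c=\lVert f\rVert_\infty$, and the $\mathcal O(n^{-1})$ claim only needs to hold for $n$ large enough that $r_n\ge N+c$.

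The gap is in Step 3, which you correctly flag as the real work but whose mechanism, as sketched, does not go through. A feedforward ReLU network cannot ``locate the active simplex and then evaluate the corresponding affine formula'': that is a data-dependent branch-and-gather, and composing comparator layers with a subsequent affine read-out of the \emph{selected} nodal values is not an operation expressible by fixed affine maps interleaved with componentwise $\rho$. In particular the depth $\lceil\log_2((d+1)!)\rceil+2$ is asserted, not derived, and it is unclear how the nodal values would ``enter linearly'' after a sorting stage. The paper avoids selection entirely: it writes the interpolant as $g=\sum_{v} f(t_k,v)\,\phi_v$ over all mesh vertices and expresses each nodal basis function in closed form as $\phi_v(y)=\min_{j\in N(v)}\rho(g_j(y))$, where the $g_j$ are the (globally defined) affine functions agreeing with $\phi_v$ on the simplices adjacent to $v$. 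This identity requires the triangulation to be \emph{locally convex} (the union of simplices around each vertex is convex), which the paper verifies for the Kuhn/standard triangulation --- a hypothesis absent from your sketch. The first layer computes the $\rho(g_j(y))$ (these are the only free weights, $(d+1)k_{\mathcal T}$ per basis function, which is how the free-weight count tracks mesh nodes rather than edges), a binary tree of pairwise minima --- each a fixed width-$4$ shallow ReLU gadget --- computes the min over at most $k_{\mathcal T}=(d+1)!$ neighbours in $\lceil\log_2((d+1)!)\rceil$ further layers, and the output layer forms the weighted sum over basis functions; this is where the depth $\lceil\log_2((d+1)!)\rceil+2$ actually comes from. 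Without replacing your select-then-evaluate scheme by this (or an equivalent) max--min representation, Step 3, and hence the complexity half of the theorem, is not established.
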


We shall note that in similar fashion further complexity estimates can be obtained if the temporal and spatial regularities are different. This includes cases where the Lipschitz constants in time and space differ or where one is given by some Sobolev or smoothness property; the resulting complexity bounds would change accordingly to the respective spacial and temporal approximation results.

\subsection*{Outline of the proof}

In a nutshell the proof of the approximation results presented above relies on a combination of a spacial approximation result for ReLU networks and a Gr{\"o}nwall argument. We quickly present the key arguments of Theorem \ref{uniapproxcomplex} and postpone any rigorous calculations to the appendix.

The proof is based on a variant of the universal approximation results in \cite{hanin2017universal} and \cite{yarotsky2018optimal} and we follow \cite{he2018relu} for the construction of piecewise linear interpolations. This method achieves optimal rates under the assumption of continuous weight assignment which are also optimal for bounded depth networks \cite[see][]{devore1989optimal, yarotsky2018optimal}. Although faster approximation rates for deep networks of bounded width are established in \cite{yarotsky2018optimal} we use the following result as it allows a direct control of the uniform norm of the networks. However, our arguments can be generalised to other universal approximation results.

\begin{proposition}[Universal approximation under Lipschitz condition]\label{uvreplip}
Let \(d, m\in\mathbb N\) and \(r>0\) and let \(f\colon\mathbb R^d\to\mathbb R^m\) be Lipschitz continuous. Then for every \(\varepsilon>0\) there is a ReLU network \(R_\varepsilon\) with parameters \(\theta_\varepsilon\) that satisfies the following:
\begin{enumerate}
\item \emph{Approximation:} It holds that
\(\sup_{x\in[-r, r]^d} \left\lVert f(x) - R_\varepsilon(x) \right\rVert\le\varepsilon.\)
\item \emph{Complexity bounds:} The network has depth \(\big\lceil \log_2((d+1)!)\big\rceil +2\), \(\mathcal O\left(r^d\varepsilon^{-d}\right)\) many neurons and all but \(\mathcal O \left(r^d\varepsilon^{-d}\right)\) weights can be fixed. Finally, if \(\left\lVert f \right\rVert\) is bounded by \(c\) so is \(\left\lVert R_\varepsilon \right\rVert\).
\end{enumerate}
\end{proposition}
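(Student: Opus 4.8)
\emph{Proof proposal.} The plan is to approximate \(f\) on \([-r,r]^d\) by its continuous piecewise linear nodal interpolant on a fine uniform simplicial mesh and to realise that interpolant \emph{exactly} as a ReLU network; the structure of the mesh then yields all three assertions at once.

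First I would choose \(h>0\) with \(2r/h\in\mathbb N\) and equip \([-r,r]^d\) with the Kuhn--Freudenthal triangulation at scale \(h\), i.e.\ subdivide every grid cube into \(d!\) simplices in the standard ordering-based way, and let \(I_hf\) be the nodal interpolant of \(f\). On a simplex with vertices \(v_0,\dots,v_d\) we have \(I_hf(x)=\sum_i\lambda_i(x)f(v_i)\) with barycentric coordinates \(\lambda_i\ge0\), \(\sum_i\lambda_i=1\), so \(\|I_hf(x)-f(x)\|=\|\sum_i\lambda_i\,(f(v_i)-f(x))\|\le\mathrm{Lip}(f)\sqrt d\,h\); taking \(h\) of order \(\varepsilon/(\sqrt d\,\mathrm{Lip}(f))\) makes this at most \(\varepsilon\) on all of \([-r,r]^d\). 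With this choice the number of grid nodes is \(\mathcal O((r/h)^d)=\mathcal O(r^d\varepsilon^{-d})\), the implied constant depending only on \(d\) and \(\mathrm{Lip}(f)\).

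Next I would realise \(I_hf\) as a ReLU network. Writing \(\phi_z\) for the nodal hat function at a node \(z\), one has \(I_hf=\sum_z f(z)\,\phi_z\), and every \(\phi_z\) is the translate and dilate \(\phi_z(x)=\phi\big((x-z)/h\big)\) of a \emph{single} reference hat function \(\phi\colon\mathbb R^d\to\mathbb R\) depending only on \(d\). Following the construction of \cite{he2018relu} (a variant of the interpolation results of \cite{hanin2017universal,yarotsky2018optimal}), \(\phi\) --- being a continuous piecewise linear function whose affine pieces are indexed by the \((d+1)!\) orderings of the local augmented coordinates --- is representable \emph{exactly} by a ReLU network of bounded depth and \(\mathcal O(1)\) neurons (constants depending only on \(d\)), the \(\log_2\) arising from evaluating the nested maxima and minima along a binary tree. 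I would then build \(R_\varepsilon\) by placing one translated copy of this network per node in parallel --- the copies differing only in the affine shift \(x\mapsto(x-z)/h\) in their first layer --- and appending one affine readout \(y\mapsto\sum_z f(z)\,y_z\in\mathbb R^m\). This assembly is a ReLU network that computes \(I_hf\) exactly, has depth \(\lceil\log_2((d+1)!)\rceil+2\), and has \(\mathcal O(1)\cdot\mathcal O(r^d\varepsilon^{-d})+m=\mathcal O(r^d\varepsilon^{-d})\) neurons; this gives assertion (1) together with the depth and neuron bounds in (2). The one mechanical point is to forward the nonnegative hat-function outputs unchanged through any intervening ReLU layers, using \(t=\rho(t)-\rho(-t)\).

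It remains to address the ``fixed weights'' and boundedness claims. All internal weights of the hat-function sub-networks are the \emph{same} fixed numbers, independent of \(f,r,\varepsilon\); only the \(\mathcal O(r^d\varepsilon^{-d})\) node shifts \(z/h\) in the first layer and readout weights \(f(z)\) in the last layer are data-dependent, which is precisely the statement that all but \(\mathcal O(r^d\varepsilon^{-d})\) weights may be fixed. Moreover the Kuhn--Freudenthal hat functions satisfy \(\phi_z\ge0\) and \(\sum_z\phi_z\equiv1\) on \([-r,r]^d\), so \(R_\varepsilon(x)=\sum_z f(z)\,\phi_z(x)\) is a convex combination of the values \(f(z)\) and hence \(\|R_\varepsilon(x)\|\le\max_z\|f(z)\|\le\|f\|_\infty\le c\). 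The main obstacle is the single ingredient imported from \cite{he2018relu,hanin2017universal,yarotsky2018optimal}: an exact ReLU realisation of the reference hat function at the claimed depth with \(\mathcal O(1)\) neurons \emph{and} a rigid, \(f\)-independent weight pattern, so that the weight-sharing count survives the assembly; the finite-element error estimate, the node count, and the parallel construction are routine.
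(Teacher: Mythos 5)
Your proposal is correct and follows essentially the same route as the paper: a scaled Kuhn--Freudenthal (``standard'') triangulation, the nodal interpolant with its Lipschitz/modulus-of-continuity error bound and partition-of-unity boundedness, and the exact ReLU realisation of hat functions as nested minima of \(\rho(g_k)\) following \cite{he2018relu}, which is precisely where the depth \(\lceil\log_2((d+1)!)\rceil+2\) and the fixed-weight count come from in the paper as well. The one ingredient you flag as imported is exactly the content of the paper's Lemma on nodal basis functions together with Proposition \ref{mfNN}, so nothing is missing.
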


\begin{proof}[Proof of Theorem \ref{uniapproxcomplex}]
For every \(n\in\mathbb N\) the previous proposition yields the existence of neural networks \(R^n_1, \dots, R^n_n\) of asserted complexity that satisfy
\[\sup_{x\in[-r_n, r_n]^d} \big\lVert f(t_k, x) - R^n_{k+1}(x) \big\rVert \le n^{-1}.\]
Since \(f\) is bounded, let's say by \(c>0\)\footnote{By this we mean that the (Euclidean) norm is bounded by \(c\).}, so are all realisations \(R^n_k\) independent of \(k\) and \(n\). Hence, for any initial condition \(y\in B_R\) in some ball the true solution \(x(t, y)\) as well as the ResNet \(x^n(t, y)\) arising from the networks \(R^n_1, \dots, R^n_n\) remain in the bounded set \(B_{R+c}\). However, on this bounded set the realisations \(R^n_k\) approximate the right hand side uniformly and thus every ResNet can be interpreted as an perturbed Euler discretisation of the ODE with right hand side \(f\). Therefore, the residual network satisfies an integral equation for every fixed inital value \(y\). An application of  Gr{\"o}nwall's inequality yields that \(x^n\) does in fact converge towards \(x\) uniformly on \([0, 1]\times B_R\) with approximation error in \(\mathcal O(n^{-1})\). Since the ball \(B_R\) was arbitrary the general statement follows. 
\end{proof}

\section{Discussion and further research}

We showed that residual networks are capable of approximating the solution of general ODEs in space-time. Further, under additional regularity assumptions we established bounds on the complexity of the residual blocks. The arguments presented above can directly be generalised to other classes of right hand sides \(f\) that allow a more effective spatial approximation through neural networks. This includes compositional functions or classes of (piecewise) smooth functions \cite[see][]{mhaskar2016learning, liang2016deep, petersen2018optimal, yarotsky2018optimal, shen2019nonlinear, montanelli2019error}.

For future directions we propose to investigate whether and if so in what notion controlled ResNets converge towards controlled ODEs. It would be particularly interesting to see which weight regularisation corresponds to which regularisations of controlled ODEs. This would be a continuation of the work by  \cite{thorpe2018deep, avelin2019neural} where residual blocks of one layer and constant weights are studied. Further, it is not clear for which class of controlled ODEs the curse of dimensionality can be circumvented, faster approximation rates can be established or weights can be shared between different residual blocks. 

\clearpage

\section*{Acknowledgments}
JM acknowledges support by the Evangelisches Studienwerk Villigst e.V. and the IMPRS MiS. Further, the authors want to thank Nihat Ay, Nicolas Charon, Philipp Harms, Jasper Hofmann, Guido Mont\'{u}far and Hsi-Wei Hsieh for valuable comments and discussions.

\bibliography{iclr2020_conference}
\bibliographystyle{abbrvnat}

\clearpage

\appendix
\section{Universal approximation with ReLU networks}

This section is concerned with the proof of the universal approximation result in Proposition \ref{uvreplip}. Similar proofs relying on the approximation through interpolation can be found in \cite{hanin2017universal,yarotsky2018optimal}. We also follow \cite{he2018relu} for the expression of nodal basis functions, however, we also bound the complexity of the ReLU networks needed to express such functions.

\subsection{Triangulations and piecewise linear functions}

Let in the following \(\mathcal T\) be a \emph{locally finite triangulation} of the entire Euclidean space \(\mathbb R^d\) consisting of nondegenerate \(d+1\) simplices \(\left\{ \tau_k\right\}_{k\in\mathbb N}\) and vertices \(\mathcal V\). More precisely, this means that the union of the simplices covers the entire space but that their interiors are pairwise disjoint and that every bounded set only intersects with finitely many simplices. Further, every simplex should be the convex hull of \(d+1\) points and have non trivial interior.

For a vertex \(x\in\mathcal V\) we set \(N(x)\coloneqq \left\{ k\in\mathbb N\mid x\in\tau_k\right\}\) define the \emph{maximum number of neighboring simplices} to be 
\[k_{\mathcal T}\coloneqq \sup_{x\in\mathcal V} \left\lvert N(x) \right\rvert \]
which we will assume to be finite. Further, we set \[\Omega(x)\coloneqq \bigcup_{k\in N(x)} \tau_k\] and call \(\mathcal T\) \emph{locally convex}, if \(\Omega(x)\) is convex for all \(x\in\mathcal V\).
The \emph{fineness} of the triangulation is defined to be the supremum over the diameters of the simplices
\[\left\lvert \mathcal T \right\rvert\coloneqq \sup_{k\in\mathbb N} \operatorname{diam}(\tau_k) \]
and we will assume that is finite. We will later give an explicit construction of a triangulation that satisfies those conditions. 

\begin{definition}[Piecewise linear functions]
We say a function \(f \colon\mathbb R^d \to\mathbb R\) is \emph{piecewise linear (PWL) with respect to} \(\mathcal T\) if it is affine linear on every simplex of the triangulation. Given such a function \(f\) we call
\[ \left\lvert \mathcal V(f) \right\rvert \coloneqq \left\lvert \left\{ x\in\mathcal V\mid f(x)\ne 0\right\} \right\rvert\]
the \emph{degrees of freedom} of the function.
\end{definition}

Note that the definition of PWL functions automatically implies continuity since the affine regions are closed and cover \(\mathbb R^d\) and affine functions are continuous. It is well known from the theory of finite elements that for every vertex \(x\in\mathcal V\) there is a with respect to \(\mathcal T\) piecewise linear function \(\phi\) that satisfies \(\phi(x)=1\) and vanishies at every other vertex. We call this function the \emph{nodal basis function} associated with \(x\). The nodal basis functions form a basis of the space of PWL functions with finitely many degrees of freedom.

We will give an explicit construction of a triangulation that satisfies the assumptions from above. For this note that the unit cube \([0, 1]^d\) can be divided into the simplices
\[S_\sigma\coloneqq\Big\{ x\in\mathbb R^d \;\big\lvert\; 0\le x_{\sigma(1)} \le \cdots \le x_{\sigma(d)} \le 1\Big\}\]
where \(\sigma\) is a permutation of the set \(\left\{ 1, \dots, d\right\}\). It is straight forward to check that those simplices cover the unit cube and have disjoint interiors and are non degenerate \(d+1\) simplices. The fineness of this triangulation is \(\sqrt d\). We call this triangulation the \emph{standard triangulation} of the Euclidean space \(\mathbb R^d\).

We will need the fact that the standard triangulation is locally convex. Since it is periodic, it suffices to show that \(\Omega(0)\) is convex. In order to do this we will show that
\[\Omega(0) = \left\{ z\in[-1, 1]^d \;\Big\lvert\; z_i\le z_j +1 \text{ for all } i, j = 1, \dots, d \right\} \eqqcolon A.\]
This expresses \(\Omega(0)\) as an intersection of convex sets and hence shows the convexity of \(\Omega(0)\). 

Let us take \(z = x - y\in \Omega(0)\) with \(x, y\in S_\sigma\) where \(y\) has binary entries. Then we obviously have \(z\in [-1, 1]^d\). Let now \(i, j\in \left\{ 1, \dots, d\right\}\), then we have to distinguish two cases. The first one is \(\sigma^{-1}(i) \le \sigma^{-1}(j)\) which implies \(x_i\le x_j\) and \(y_i\le y_j\) and thus
\[z_i - z_j = (x_i - x_j)+(y_j - y_i) \le y_j \le 1.\]
For \(\sigma^{-1}(i) >\sigma^{-1}(j)\) an analogue computation shows \(z_i\le z_j+1\) and hence we obtain the inclusion \(\Omega(0)\subseteq A\).
To see that the other inclusion holds true, we fix \(z\in A\) and set \(I\coloneqq\left\{ i\mid z_i\ge0\right\}\) and \(J\coloneqq\left\{ 1, \dots, d\right\}\setminus I\). Further, we define \(y\in \left\{ 0, 1\right\}^d\) via
\[y_i\coloneqq\begin{cases} \; 0 \quad &\text{for } i\in I \\ \; 1 &\text{otherwise}\end{cases}\]
and \(x\coloneqq z + y\). By construction we have \(z = x - y\) and \(x, y\in[0, 1]^d, y\in\left\{ 0, 1\right\}^d\) and hence we only need to show the existence of a permutation \(\sigma\) such that \(x, y\in S_\sigma\). Obviously, the statement \(y\in S_\sigma\) is equivalent to \(\sigma^{-1}(i)\le\sigma^{-1}(j)\) for all \(i\in I, j\in J\). Since for \(i\in I\) and \(j\in J\) we have
\[x_i = z_i \le z_j + 1= x_j, \]
there is a permutation that additionally satisfies \(\sigma^{-1}(i)\le \sigma^{-1}(j)\) whenever \(x_i\ge x_j\) for some \(i, j\in\left\{ 1, \dots, d\right\}\).

\subsection{Exact expression of piecewise linear functions as ReLU networks}

We quickly present well known examples of functions that can exactly be expressed by ReLU networks \cite[see][]{he2018relu, petersen2018optimal}.

\begin{enumerate}
    \item \emph{Identity mapping.} A basic calculation shows the identity 
\begin{equation}
x = \rho(x) - \rho(-x) \quad \text{for all } x\in\mathbb R^d.
\end{equation}
Hence, the identity is can be expressed as a ReLU network of width \(2d\) which is visualised below. Note, that one can express the identity function as arbitrarily deep ReLU networks of width \(2d\) since one can simply add more hidden layers where the affine linear transformation is the identity. 
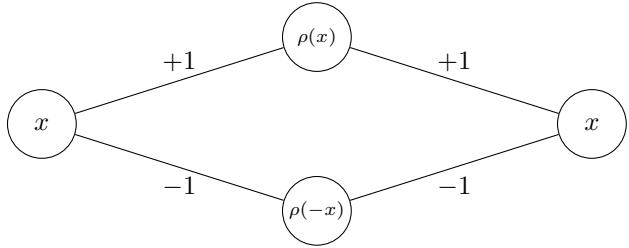
\begin{figure}[H]
\centering
\begin{tikzpicture}\tikzset{node distance = 0.5cm and 3cm}
\tikzstyle{vertex}=[draw,circle,minimum size=26pt,inner sep=1pt]
\node[vertex] (input) at (0,0) {\(x\)};
\node[vertex, above right = of input] (x1) {\(\scriptstyle\rho(x)\)};
\node[vertex, below right = of input] (x2) {\(\scriptstyle\rho(-x)\)};
\node[vertex, below right = of x1] (x3) {\(x\)};
\draw (input) -- node [above] {\(+1\)} (x1);
\draw (input) -- node [below] {\(-1\)} (x2);
\draw (x1) -- node [above] {\(+1\)} (x3);
\draw (x2) -- node [below] {\(-1\)} (x3);
\end{tikzpicture}
\caption{An example for an expression of the identity mapping as a ReLU network.}\label{examplegraphs}
\end{figure}
Similarly, one obtains that the absolute value can be expressed as a ReLU network of arbitrary depth and width \(2\) since \(\left\lvert x \right\rvert = \rho(x) + \rho(-x)\).
\item \emph{Minimum operation. } It is elementary to check
\[\min(x, y) = \frac12\big( x + y - \left\lvert x - y \right\rvert\big).\]
We have already seen how the terms on the right hand side can be expressed as shallow ReLU networks and hence we obtain
\begin{equation}\label{exmin}
    \min(x, y) = \frac12 \Big( \rho(x+y) - \rho(-x-y) - \rho(x-y) - \rho(-x+y)\Big).
\end{equation}

Therefore, the minimum operation can be expressed as a shallow ReLU network of width \(4\) and with weights \(\pm\frac12, \pm1\).

\begin{figure}[H]
\centering
\begin{tikzpicture}\tikzset{node distance = 0.4cm and 2cm}
\tikzstyle{vertex}=[draw,circle,minimum size=38pt,inner sep=1pt]
\node[vertex] (input1) at (0,0) {\(x\)};
\node[vertex, below = of input1] (input2) {\(y\)};
\node[vertex, right = of input1] (y2) {\(\scriptstyle \rho(-x-y)\)};
\node[vertex, above = of y2] (y1) {\(\scriptstyle \rho(x+y)\)};
\node[vertex, right = of input2] (y3) {\(\scriptstyle \rho(x-y)\)};
\node[vertex, below = of y3] (y4) {\(\scriptstyle \rho(-x+y)\)};
\node[vertex, below right = -0.2cm and 2cm of y2] (z) {\(\scriptstyle \min(x, y)\)};
\foreach \i in {1,...,4} {
\draw (input1) -- (y\i);
\draw (input2) -- (y\i);
\draw (y\i) -- (z);
}
\end{tikzpicture}
\caption{Expressing the minimum operation through a shallow ReLU network.}
\end{figure}
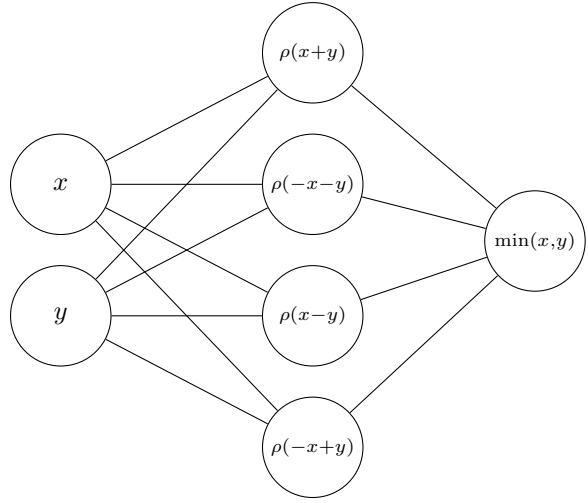
\end{enumerate}

A consequence of the fact that the identity can be expressed as a shallow network is that the class of neural networks is closed under summation and also parallelisation. We will use those concepts of parallelisation and summation of networks which are relatively intuitive and we refer to \cite{petersen2018optimal} for further details.

The expression of nodal basis functions as ReLU networks relies on the following proposition. 

\begin{lemma}
Let \(\mathcal T\) be a locally finite and locally convex triangulation of \(\mathbb R^d\) and let \(x\in\mathcal V\) with nodal basis function \(\phi\). Then we have
\begin{equation}\label{nbfmin}
    \phi(y) = \max\left\{ 0, \min_{k\in N(x)} g_k(y) \right\} = \min_{k\in N(x)} \rho(g_k(y)) \quad \text{for all } y\in\mathbb R^d,
\end{equation}
where \(g_k\) is the globally affine linear function that agrees with \(\phi\) on the simplex \(\tau_k\).
\end{lemma}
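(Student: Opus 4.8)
The plan is to verify the two asserted equalities in \eqref{nbfmin} by exploiting local convexity and the defining properties of the nodal basis function $\phi$ associated with the vertex $x$. Recall that $\phi$ is the piecewise linear function with $\phi(x) = 1$, $\phi(z) = 0$ for every other vertex $z \in \mathcal V$, and $\phi$ is affine on each simplex; in particular $\phi$ is supported on $\Omega(x) = \bigcup_{k \in N(x)} \tau_k$. First I would establish the second equality $\max\{0, \min_{k} g_k(y)\} = \min_k \rho(g_k(y))$, which is purely formal: since $\rho(t) = \max\{0, t\}$ and $\min$ commutes with the (monotone) truncation at $0$, we have $\min_k \max\{0, g_k(y)\} = \max\{0, \min_k g_k(y)\}$ pointwise, with no geometric input needed.

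The substance is the first equality. I would split into cases according to whether $y \in \Omega(x)$ or not. If $y \in \tau_j$ for some $j \in N(x)$, then by definition $\phi(y) = g_j(y)$, so it suffices to show $g_j(y) = \max\{0, \min_{k \in N(x)} g_k(y)\}$. For this I need two facts: (i) $g_j(y) \ge 0$, and (ii) $g_j(y) \le g_k(y)$ for every $k \in N(x)$. For (i), note $\phi$ is affine on $\tau_j$ with nonnegative values at all $d+1$ vertices of $\tau_j$ (the value is $1$ at $x$ if $x$ is a vertex of $\tau_j$, and $0$ at the others), hence nonnegative on the simplex by convexity, so $g_j(y) = \phi(y) \ge 0$. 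For (ii): the affine function $g_j - g_k$ vanishes on $\tau_j \cap \tau_k$ (which is a common face containing $x$ and at least $x$), and we want $g_j - g_k \le 0$ on $\tau_j$. Here is where local convexity enters: because $\Omega(x)$ is convex and contains both $\tau_j$ and $\tau_k$, one can compare $g_j$ and $g_k$ by extending them affinely and using that $\phi \le g_k$ on all of $\Omega(x)$ — indeed $g_k$ agrees with $\phi$ on $\tau_k$ where $\phi$ attains its affine shape, and since $\phi$ restricted to $\Omega(x)$ is a concave-like "tent" (maximum of value $1$ at $x$ tapering to $0$), each affine piece $g_k$ dominates $\phi$ throughout $\Omega(x)$. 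I would make this precise by arguing that for any $y \in \Omega(x)$ the segment from $x$ to $y$ stays in $\Omega(x)$ (convexity), crosses finitely many simplices, and along it $\phi$ is piecewise linear, concave, and pinned at $\phi(x)=1$; comparing slopes gives $\phi(y) \le g_k(y)$ for each $k \in N(x)$, with equality when $y \in \tau_k$.

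The remaining case is $y \notin \Omega(x)$, where $\phi(y) = 0$ and I must show $\max\{0, \min_{k \in N(x)} g_k(y)\} = 0$, i.e.\ $\min_{k \in N(x)} g_k(y) \le 0$. This again follows from local convexity: pick any $k$ such that the ray from $x$ through $y$ exits $\Omega(x)$ through the simplex $\tau_k$ (possible since $\Omega(x)$ is convex and bounded, so the ray leaves it across the boundary of some $\tau_k \in N(x)$); on the far boundary facet of $\tau_k$ opposite $x$ the function $\phi$ equals $0$, so $g_k$ is negative beyond that facet, and $y$ lies beyond it, giving $g_k(y) < 0$, hence $\min_k g_k(y) \le 0$.

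I expect the main obstacle to be the case $y \notin \Omega(x)$ together with pinning down exactly how local convexity is used: one must be careful that "the simplex through which the ray exits" is indeed one of the $\tau_k$ with $k \in N(x)$ (i.e.\ that one exits $\Omega(x)$ through a facet not containing $x$), and that the affine extension $g_k$ is genuinely negative on the outer side. Convexity of $\Omega(x)$ is precisely what guarantees the exit facet is "opposite" $x$ in the relevant sense. I would phrase the comparison-of-slopes argument carefully enough to cover both the interior inequality $\phi \le g_k$ on $\Omega(x)$ and the strict sign outside, treating $\phi$ restricted to $\Omega(x)$ as the function $z \mapsto \max\{0, \min_k g_k(z)\}$ by induction on the chain of simplices crossed by a segment from $x$; everything else is routine affine algebra.
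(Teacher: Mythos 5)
The paper itself does not prove this lemma but defers to the reference \cite{he2018relu}, so there is no in-text argument to compare against; judged on its own terms, your proposal has the right architecture (the case split \(y\in\Omega(x)\) versus \(y\notin\Omega(x)\), the reduction to \(g_j\ge 0\) and \(g_j\le g_k\) on \(\tau_j\), the exit-ray argument outside the star, and the formal identity \(\max\{0,\min_k a_k\}=\min_k\max\{0,a_k\}\)), but the central step is not actually proved. The inequality \(g_j\le g_k\) on \(\tau_j\) --- equivalently \(\phi\le g_k\) on \(\Omega(x)\), equivalently concavity of \(\phi\) on \(\Omega(x)\) --- is precisely the content of the lemma on the star, and you justify it by asserting that \(\phi\) is a ``concave-like tent,'' which is circular. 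Moreover, the mechanism you propose to make it precise (comparing slopes along the segment from \(x\) to \(y\), with ``induction on the chain of simplices crossed'') cannot work: since each \(\tau_j\) is convex and contains \(x\), the segment from \(x\) to any \(y\in\tau_j\) lies entirely in \(\tau_j\), so it crosses exactly one simplex, \(\phi\) is affine along it, and the segment carries no information about \(g_k\) for \(k\ne j\).

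The missing idea is where local convexity actually enters. Let \(F_k\) denote the facet of \(\tau_k\) opposite \(x\). One first checks \(F_k\subseteq\partial\Omega(x)\): a relative interior point of \(F_k\) lies in exactly two \(d\)-simplices, \(\tau_k\) and the simplex across \(F_k\), and the latter does not contain \(x\). Since \(\Omega(x)\) is convex, the supporting hyperplane at such a point must contain \(F_k\), hence equals \(\operatorname{aff}(F_k)\); as \(g_k\) vanishes on \(\operatorname{aff}(F_k)\) and \(g_k(x)=1\), this yields \(g_k\ge 0\) on all of \(\Omega(x)\). Then \(g_j-g_k\) is affine on \(\tau_j\), equal to \(0\) at \(x\) and to \(-g_k(v)\le 0\) at every other vertex \(v\) of \(\tau_j\), hence \(\le 0\) on \(\tau_j\); together with \(g_j=\phi\ge 0\) on \(\tau_j\) this settles the case \(y\in\Omega(x)\). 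Your treatment of \(y\notin\Omega(x)\) is essentially correct once one knows the exit point of the segment \([x,y]\) lies in some \(F_k\) (which again follows from \(\partial\Omega(x)=\bigcup_k F_k\)): \(g_k\) decreases affinely from \(1\) at \(x\) to \(0\) at the exit point and is therefore negative at \(y\), so \(\min_k g_k(y)\le 0\) as required.
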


For a proof we refer to \cite{he2018relu} which we also follow closely for the next two results, however, we additionally bound the complexity of the neural networks.

\begin{proposition}[Minimum function]\label{mfNN}
The minimum function \(\min\colon\mathbb R^d\to\mathbb R\) can be expressed through a ReLU network of depth \(\lceil \log_2(d)\rceil+1\). Further, such a network can be constructed with weights \(\big\{ 0, \pm\frac12, \pm1\big\}\) and \( \mathcal O(d)\) many neurons and \(\mathcal O(d)\) non-zero weights. 
\end{proposition}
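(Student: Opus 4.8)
The plan is to realise $\min\colon\mathbb R^d\to\mathbb R$ as a balanced binary tree whose internal nodes are copies of the two–argument minimum network from \eqref{exmin}, using the associativity of the minimum: $\min(x_1,\dots,x_d)=\min\big(\min(x_1,x_2),\min(x_3,x_4),\dots\big)$.

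Concretely, I would set $h\coloneqq\lceil\log_2 d\rceil$ and build the network level by level. At level $1$ I pair up the coordinates $x_1,x_2,\dots$ and apply one copy of the width–$4$ network \eqref{exmin} to each pair; if the current number of values is odd, the leftover value is carried to the next level through an identity block $a\mapsto\rho(a)-\rho(-a)$ realised with one hidden layer of width $2$, so all values stay synchronised in depth. Iterating, after level $i$ there are $\lceil d/2^i\rceil$ values, hence after $h$ levels a single value remains, namely $\min(x_1,\dots,x_d)$. Since composing a depth–$p$ and a depth–$q$ ReLU network produces a depth–$(p+q-1)$ network and each level is a depth–$2$ transformation, composing the $h$ levels yields depth $h+1=\lceil\log_2 d\rceil+1$. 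All weights are those of \eqref{exmin} together with the $\pm1$ of the identity blocks, i.e. they lie in $\{0,\pm\frac12,\pm1\}$, and this property is preserved by the parallelisation and composition of these particular building blocks.

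It remains to count. At level $i$ there are at most $\lceil d/2^{i-1}\rceil/2\le d/2^i+1$ minimum gates, each contributing $4$ hidden neurons, plus at most one carry contributing $2$ more; summing $\sum_{i=1}^h\big(4(d/2^i+1)+2\big)\le 4d+6h$ and adding the $d$ inputs and the single output gives $\mathcal O(d)$ neurons. Likewise the tree contains $\mathcal O(d)$ gates and carries, each with only $\mathcal O(1)$ nonzero weights, so the number of nonzero weights is $\mathcal O(d)$ as well.

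The main obstacle is not any single estimate but the uniform bookkeeping across levels: one must deal with $d$ not being a power of two without inflating the depth — this is exactly why the leftover coordinate is carried by a depth–$2$ identity block rather than, say, duplicated (duplicating a coordinate would force a weight $2$ once two equal copies are fed into an addition) — and one must track that every affine map produced by composing and parallelising these blocks still has entries in $\{0,\pm\frac12,\pm1\}$ with only $\mathcal O(1)$ nonzero entries per gate. Once this setup is in place, the composition rule $p+q-1$ for the depth and the geometric sum $\sum_{i\ge1}4d/2^i\le 4d$ for the neuron count give the claimed bounds directly.
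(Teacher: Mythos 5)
Your construction is essentially the same as the paper's: a balanced binary tree of the width-$4$ pairwise-minimum gadget from \eqref{exmin}, composed over $\lceil\log_2 d\rceil$ levels with the depth rule $p+q-1$, giving depth $\lceil\log_2 d\rceil+1$ and $\mathcal O(d)$ neurons and non-zero weights. The only difference is that you handle $d$ not a power of two explicitly via identity carry blocks $a\mapsto\rho(a)-\rho(-a)$, whereas the paper simply assumes $d=2^m$ for notational convenience; your bookkeeping there is correct and slightly more complete.
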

\begin{proof}
Let us for the sake of easy notation assume \(d = 2^m\). 
The construction of the representation of the minimum function relies on the observation that the minimum operation is the composition of \(\log_2(d)=m\) mappings of the form
\[f_k\colon\mathbb R^{2^k} \to \mathbb R^{2^{k-1}}, \quad \begin{pmatrix}
x_1 \\ \vdots \\ x_{2^k}
\end{pmatrix}\mapsto \begin{pmatrix}
\min(x_1, x_2) \\ \vdots \\ \min(x_{2^k-1}, x_{2^k})
\end{pmatrix}. \]
Those functions are the realisation of a parallelisation of the representation of the minimum function constructed in Example \ref{exmin}. More precisely, \(f_k\) can be represented through a shallow ReLU network where the dimension of the hidden layer is \(2\cdot2^k\). The concatenation of the \(m\) networks that represent the functions \(f_k\) is a representation of the minimum function of depth \(m+1\). By adding the dimensions of the layers we obtain the this network has
\[2^m+2\cdot 2^m + \dots + 2^2 + 1= 5d - 3\]
neurons and \(4\cdot (5d - 4)\) non-zero weights. 
\end{proof}

\begin{theorem}[Exact expression of PWL functions as ReLU networks]\label{exchar}
Consider \(d, m\in\mathbb N\) and let \(\mathcal T\) be a locally finite and locally convex triangulation of \(\mathbb R^d\) with \(k_{\mathcal T}<\infty\). Every function \(f\colon\mathbb R^d\to\mathbb R^m\) that is piecewise linear with respect to \(\mathcal T\) with \(N\) degrees of freedom can be expressed as a deep ReLU network with depth \(\lceil \log_2(k_{\mathcal T})\rceil + 2\) and at most \( \mathcal O(mk_{\mathcal T}N + d)\) neurons. Further, all but
\(m(d+1)k_{\mathcal T}N\)
weights can be fixed.
\end{theorem}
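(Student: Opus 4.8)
The plan is to combine the nodal-basis expansion of piecewise linear functions with the exact ReLU realisation of the minimum from Proposition~\ref{mfNN}, using that ReLU networks are closed under parallelisation and summation (which in turn rests on the exact representations of the identity and of $\min$ given above).

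I would first reduce to the scalar case. Write $f=(f_1,\dots,f_m)$ with each $f_i\colon\mathbb R^d\to\mathbb R$ piecewise linear with respect to $\mathcal T$; since $\{x\in\mathcal V:f_i(x)\ne 0\}\subseteq\{x\in\mathcal V:f(x)\ne 0\}$, every $f_i$ has at most $N$ degrees of freedom, so $f_i=\sum_{x\in\mathcal V(f_i)}f_i(x)\,\phi_x$ is a finite sum of at most $N$ nodal basis functions (here local finiteness guarantees that this expansion is well defined). It therefore suffices to realise each $\phi_x$ exactly by a ReLU network of the claimed depth, and then to parallelise and linearly combine.

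For a fixed vertex $x$, local convexity lets us invoke Lemma~\ref{nbfmin}: $\phi_x(y)=\min_{k\in N(x)}\rho(g_k(y))$ with the $g_k$ globally affine and $\lvert N(x)\rvert\le k_{\mathcal T}<\infty$. I would realise this by letting the first affine layer compute $y\mapsto(g_k(y))_{k\in N(x)}$, padded up to length $k_{\mathcal T}$ by repeating one entry (which leaves the minimum unchanged), so that after the ReLU we have $(\rho(g_k(y)))_k$; on top of this layer I compose the realisation of $\min\colon\mathbb R^{k_{\mathcal T}}\to\mathbb R$ from Proposition~\ref{mfNN}. Because that realisation has depth $\lceil\log_2 k_{\mathcal T}\rceil+1$ and its first affine map acts directly on the layer just produced, the network for $\phi_x$ has depth exactly $\lceil\log_2 k_{\mathcal T}\rceil+2$ and $\mathcal O(k_{\mathcal T})$ neurons; all of its weights except the $(d+1)k_{\mathcal T}$ parameters of the first affine layer lie in $\{0,\pm\tfrac12,\pm1\}$.

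Finally I would assemble $f$. All $\phi_x$-networks now share the depth $\lceil\log_2 k_{\mathcal T}\rceil+2$, so parallelising the at most $N$ of them over a common $d$-neuron input layer gives a ReLU network $y\mapsto(\phi_x(y))_x$ with $\mathcal O(d+k_{\mathcal T}N)$ neurons; composing with the affine map $(\phi_x)_x\mapsto\sum_x f(x)\phi_x$ and merging it into the final affine layer produces $f$ without increasing the depth. Carrying this out for the $m$ components yields the asserted $\mathcal O(mk_{\mathcal T}N+d)$ neurons, and the only weights that are not fixed constants are the $\le m(d+1)k_{\mathcal T}N$ parameters of the first affine layers specifying the maps $g_k$, into which the combination coefficients $f_i(x)$ can be absorbed (using $c\,\rho(t)=\rho(ct)$ for $c\ge 0$ and a final sign for $c<0$). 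I expect the main obstacle to be the depth and size bookkeeping under composition and parallelisation — in particular checking that turning $g_k(y)$ into $\rho(g_k(y))$ before the minimum costs exactly one extra layer, so that the depth is $\lceil\log_2 k_{\mathcal T}\rceil+2$ and no larger, and keeping the neuron and weight counts tight while uniformising the arities $\lvert N(x)\rvert\le k_{\mathcal T}$ across all vertices.
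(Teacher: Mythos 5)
Your proposal is correct and follows essentially the same route as the paper: reduce to $m=1$, expand $f$ in at most $N$ nodal basis functions, realise each $\phi_x=\min_{k\in N(x)}\rho(g_k(y))$ by one affine-plus-ReLU layer feeding the minimum network of Proposition~\ref{mfNN}, and then parallelise and sum. Your extra care in padding the arities to $k_{\mathcal T}$ and absorbing the combination coefficients $f_i(x)$ into the first layer via positive homogeneity of $\rho$ is a refinement the paper leaves implicit, but it does not change the argument.
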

\begin{proof}
We assume \(m=1\) and note that the general statement follows from building a parallelised network.
Since \(f\) is the linear combination of \(N\) nodal basis functions \(\phi\) and hence it suffices to represent \(\phi\) through a neural network as a representation of \(f\) can be obtained by considering the standard addition of those networks.

In order to represent \(\phi\), we use \eqref{nbfmin} and the previous proposition. For the sake of easy notation we assume that \(N(x) = \left\{ 1, \dots, M\right\}\), then \(\phi\) can be represented by the following network depicted in Figure \ref{PWLNN} where the dashed part stands for a representation of the minimum function. It is clear that all weights except the ones of the first layer – which are \((d+1)M \le (d+1)k_{\mathcal T}\) many – are fixed. 
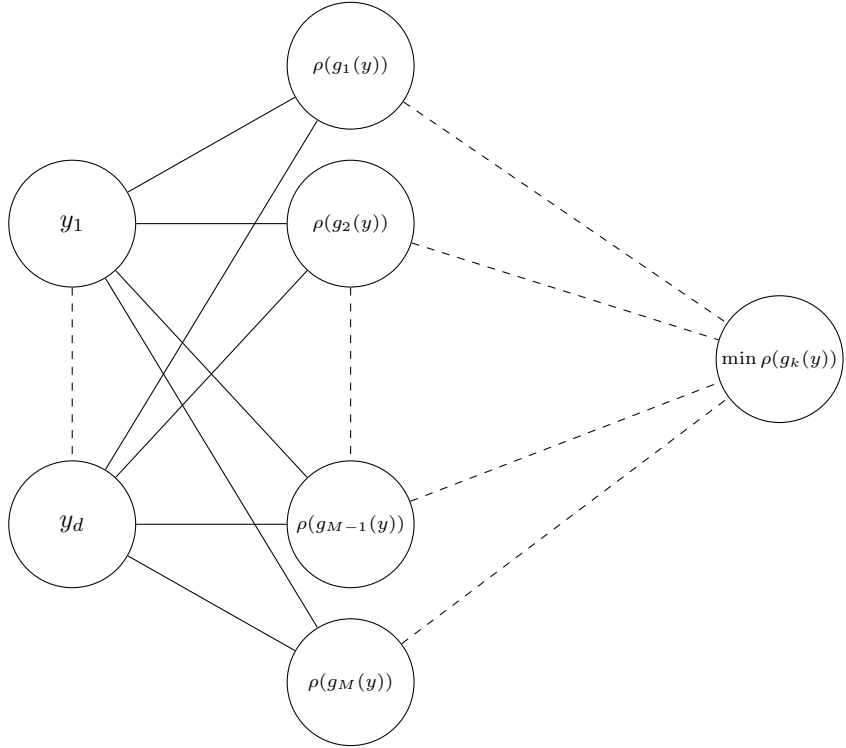
\begin{figure}[h!]
\centering
\begin{tikzpicture}\tikzset{node distance = 0.4cm and 2cm}
\tikzstyle{vertex}=[draw,circle,minimum size=48pt,inner sep=1pt]
\node[vertex] (input1) at (0,0) {\(y_1\)};
\node[vertex]
(input2) at (0, -4) {\(y_{d}\)};
\node[vertex, right = of input1] (y2) {\(\scriptstyle\rho(g_2(y))\)};
\node[vertex, above = of y2] (y1) {\(\scriptstyle\rho(g_1(y))\)};
\node[vertex, right = of input2] (y3) {\(\scriptstyle\rho(g_{M-1}(y))\)};
\node[vertex, below = of y3] (y4) {\(\scriptstyle\rho(g_M(y))\)};
\draw[dashed] (input1) -- (input2);
\draw[dashed] (y2) -- (y3);
\node[vertex, below right = 0.6cm and 4.5cm of y2] (output) {\(\scriptstyle\min \rho(g_k(y))\)};
\foreach \i in {1,...,4} {
\draw (input1) -- (y\i);
\draw (input2) -- (y\i);
\draw[dashed] (y\i) -- (output);
}
\end{tikzpicture}
\caption{An expression of a nodal basis function \(\phi\) where the dashed part stands for the expression of the minimum function that was constructed in Proposition \ref{mfNN}.}\label{PWLNN}
\end{figure}
\end{proof}

\subsection{Universal approximation through interpolation}

We introduce the \emph{modulus of continuity}
\[w_f\colon[0, \infty)\to[0, \infty], \quad  \delta\mapsto \sup\Big\{\left\lVert f(x) - f(y) \right\rVert \;\big\lvert\; x, y\in\Omega, \left\lVert x - y \right\rVert \le \delta \Big\}.\]
It is elementary to check that a function is uniformly continuous if and only if the modulus of continuity takes finite values and is continuous. The pseudoinverse \(w_f^{-1}\colon[0, \infty)\to[0, \infty)\) of the modulus of continuity is defined via
\[\varepsilon \mapsto \inf\left\{ \delta>0\mid w_f(\delta)>\varepsilon\right\}.\]
Note that if \(w_f\) is continuous, we have \(w_f(w_f^{-1}(\varepsilon))=\varepsilon\) for all \(\varepsilon>0\), i.e. we have 
\[\left\lVert f(x) - f(y) \right\rVert \le \varepsilon \quad \text{for all } x, y\in\Omega \text{ with } \left\lVert x - y \right\rVert \le w_f^{-1}(\varepsilon).\]
Finally, if \(f\) is Lipschitz continuous with constant \(L\) we have \(w_f(\delta)\le L\delta\) and hence \(w_f^{-1}(\varepsilon)\ge\frac\varepsilon L\).

\begin{proposition}[Function approximation with piecewise linear functions]
Let \(d, m\in\mathbb N\) and \(f\colon\mathbb R^d\to\mathbb R^m\) be a continuous function and \(\mathcal T\) be a locally finite triangulation of the Euclidean space \(\mathbb R^d\) with fineness \(\delta\in(0, \infty)\). Let \(\Omega\subseteq\mathbb R^d\) be a union of simplices of \(\mathcal T\) and \(g\) be the with respect to \(\mathcal T\) piecewise linear function that agrees with \(f\) on all vertices inside of \(\Omega\) and vanishes everywhere else. Then we have
\[\left\lVert f - g \right\rVert_{\infty, \Omega}\le w_{f|_\Omega}(\delta).\]
Finally, we have \(\left\lVert g \right\rVert_\infty\le \left\lVert f \right\rVert_\infty\).
\end{proposition}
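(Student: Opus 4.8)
The plan is to reduce everything to a pointwise estimate on a single simplex via barycentric coordinates. Fix $x\in\Omega$. Since $\Omega$ is a union of simplices of $\mathcal T$, there is a nondegenerate simplex $\tau\in\mathcal T$ with $x\in\tau\subseteq\Omega$; write $v_0,\dots,v_d$ for its vertices and let $\lambda_0,\dots,\lambda_d\ge 0$ with $\sum_{i=0}^d\lambda_i=1$ be the (unique) barycentric coordinates of $x$ in $\tau$, so that $x=\sum_{i=0}^d\lambda_i v_i$. Because $g$ is affine linear on $\tau$ and each vertex $v_i$ lies in $\tau\subseteq\Omega$, hence is a vertex inside $\Omega$, the defining property of $g$ gives $g(v_i)=f(v_i)$, and therefore $g(x)=\sum_{i=0}^d\lambda_i f(v_i)$.

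From here the error bound is a short computation. Using $\sum_i\lambda_i=1$, the triangle inequality, and convexity of the norm,
\[\left\lVert f(x)-g(x)\right\rVert=\left\lVert\sum_{i=0}^d\lambda_i\big(f(x)-f(v_i)\big)\right\rVert\le\sum_{i=0}^d\lambda_i\left\lVert f(x)-f(v_i)\right\rVert.\]
Since $x,v_i\in\tau$ we have $\left\lVert x-v_i\right\rVert\le\operatorname{diam}(\tau)\le\left\lvert\mathcal T\right\rvert=\delta$, and since $x,v_i\in\Omega$ the definition of the modulus of continuity yields $\left\lVert f(x)-f(v_i)\right\rVert\le w_{f|_\Omega}(\delta)$. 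Plugging this in and using $\sum_i\lambda_i=1$ gives $\left\lVert f(x)-g(x)\right\rVert\le w_{f|_\Omega}(\delta)$; taking the supremum over $x\in\Omega$ proves the first claim.

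For $\left\lVert g\right\rVert_\infty\le\left\lVert f\right\rVert_\infty$ I would run the same barycentric computation but now for an arbitrary $x\in\mathbb R^d$, lying in some simplex $\tau$ whose vertices $v_0,\dots,v_d$ need not all belong to $\Omega$. By construction $g(v_i)$ equals $f(v_i)$ if $v_i$ is a vertex inside $\Omega$ and $0$ otherwise, so in either case $\left\lVert g(v_i)\right\rVert\le\left\lVert f\right\rVert_\infty$; hence $\left\lVert g(x)\right\rVert=\left\lVert\sum_i\lambda_i g(v_i)\right\rVert\le\sum_i\lambda_i\left\lVert g(v_i)\right\rVert\le\left\lVert f\right\rVert_\infty$, and taking the supremum over $x$ finishes the proof. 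The only point requiring care — more a bookkeeping remark than an obstacle — is that $g$ must be evaluated consistently at points lying in several simplices (this is exactly the continuity of PWL functions, already noted above) and that in the error estimate one genuinely may pick a simplex $\tau\subseteq\Omega$, so that all vertices entering the interpolation carry the value $f(v_i)$ and not $0$. Monotonicity of $\delta\mapsto w_{f|_\Omega}(\delta)$, used implicitly, is immediate from the definition.
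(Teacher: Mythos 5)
Your proof is correct and follows essentially the same route as the paper: barycentric (convex) coordinates on a simplex contained in $\Omega$, the triangle inequality against the modulus of continuity for the error bound, and the same convex-combination argument for the uniform bound. Your handling of the second claim is in fact slightly more careful than the paper's, since you explicitly note that $g(v_i)$ may be $0$ rather than $f(v_i)$ for vertices outside $\Omega$, which still satisfies $\lVert g(v_i)\rVert\le\lVert f\rVert_\infty$.
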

\begin{proof}
Let \(x\in \Omega\) then \(x\) lies in a convex simplex with vertices \(x_1, \dots, x_{d+1}\in\Omega\). Hence, we find convex weights \(\alpha_1, \dots, \alpha_{d+1}\in[0, 1]\) such that \(x = \sum_{i=1}^{d+1}\alpha_ix_i\). Now we obtain
\[\big\lVert f(x) - g(x) \big\rVert = \left\lVert f(x) - \sum_{i=1}^{d+1}\alpha_i f(x_i) \right\rVert \le \sum_{i=1}^{d+1}\alpha_i \big\lVert f(x) - f(x_i) \big\rVert \le w_{f|_\Omega}(\delta). \]
Furthermore, if \(\left\lVert f \right\rVert\) is bounded by \(c\), then we obtain 
\[\big\lVert g(x) \big\rVert \le \sum_{i=1}^{d+1}\alpha_i \cdot \big\lVert f(x_i) \big\rVert \le \sum_{i=1}^{d+1}\alpha_i \cdot c = c \]
for all \(x \in \mathbb R^d\).
\end{proof}

Combining the previous results with the construction of the standard triangulation we obtain the following result.

\begin{proposition}[Universal approximation with ReLU networks]\label{app:uniapprox}
Consider a continuous function \(f\colon\mathbb R^d\to\mathbb R^m\) where \(d, m\in\mathbb N\). Let further \(r>0\) and \(\varepsilon>0\) and let \(w_{f, r}\) be the modulus of continuity of \(f|_{[-r, r]^d}\). Then for every \(\varepsilon>0\) there is a ReLU network \(R_\varepsilon\) with parameters \(\theta_\varepsilon\) that satisfies the following:
\begin{enumerate}
\item \emph{Approximation:} It holds that
\(\sup_{x\in[-r, r]^d} \left\lVert f(x) - R_\varepsilon(x) \right\rVert\le\varepsilon.\)
\item \emph{Complexity bounds:} The network has depth \(\big\lceil \log_2((d+1)!)\big\rceil +2\)
, \(\mathcal O\big(\omega_{f,r}^{-1}(\varepsilon)^{-d}\big)\) many neurons and all but \(\mathcal O \big(\omega_{f,r}^{-1}(\varepsilon)^{-d}\big)\) weights can be fixed. Finally, we have \(\left\lVert R_\varepsilon \right\rVert_\infty\le \left\lVert f \right\rVert_\infty\).
\end{enumerate}
\end{proposition}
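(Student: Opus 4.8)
The plan is to produce $R_\varepsilon$ as the \emph{exact} ReLU-network expression, via Theorem~\ref{exchar}, of a piecewise linear interpolant of $f$ on a rescaled copy of the standard triangulation; then almost all of the work is already contained in the results assembled above, and only a choice of mesh size and a count of vertices remain.

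First I would fix a scale $h>0$ and pass to the dilated triangulation $\mathcal T_h\coloneqq h\,\mathcal T$ of the standard triangulation $\mathcal T$ of $\mathbb R^d$. Since $x\mapsto hx$ is a linear homeomorphism, $\mathcal T_h$ is again locally finite, has the same combinatorial structure as $\mathcal T$ (in particular $k_{\mathcal T_h}=k_{\mathcal T}$), is locally convex because $\Omega(0)$ was shown to be convex for $\mathcal T$ and convexity is preserved under linear maps, and has fineness $h\sqrt d$; thus $\mathcal T_h$ satisfies all hypotheses of Theorem~\ref{exchar}. Along the way I would record that $k_{\mathcal T}=(d+1)!$: the simplices containing the vertex $0$ are distributed over the $2^d$ unit cubes adjacent to $0$, and in the cube indexed by a subset $S\subseteq\{1,\dots,d\}$ the vertex $0$ corresponds, after translation to $[0,1]^d$, to the $0$–$1$ vector with support $S$, which lies in exactly $\lvert S\rvert!\,(d-\lvert S\rvert)!$ of the $d!$ simplices $S_\sigma$ (those $\sigma$ ordering all $0$-coordinates before all $1$-coordinates); summing gives $\sum_{k=0}^d\binom dk k!(d-k)! = (d+1)!$. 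This is precisely why the depth supplied by Theorem~\ref{exchar} is $\lceil\log_2((d+1)!)\rceil+2$.

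Next I would choose the mesh. Since $f|_{[-r,r]^d}$ is uniformly continuous on a compact set, $w_{f,r}$ is continuous, so $w_{f,r}\big(w_{f,r}^{-1}(\varepsilon)\big)=\varepsilon$; picking $h$ with $h\sqrt d\le w_{f,r}^{-1}(\varepsilon)$ and of the same order, the fineness of $\mathcal T_h$ is at most $w_{f,r}^{-1}(\varepsilon)$. Let $\Omega$ be the union of all simplices of $\mathcal T_h$ meeting $[-r,r]^d$ — a finite union by local finiteness, with $[-r,r]^d\subseteq\Omega\subseteq[-r-h\sqrt d,\,r+h\sqrt d]^d$ — and let $g$ be the piecewise linear interpolant of $f$ on $\Omega$ given by the preceding proposition on function approximation with piecewise linear functions. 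That proposition yields $\lVert f-g\rVert_{\infty,[-r,r]^d}\le\lVert f-g\rVert_{\infty,\Omega}\le w_{f\vert_\Omega}(h\sqrt d)\le\varepsilon$ (by monotonicity of the modulus; the mild enlargement of the domain is harmless for $h$ small, and is irrelevant when $f$ is globally Lipschitz, as in Proposition~\ref{uvreplip}) together with $\lVert g\rVert_\infty\le\lVert f\rVert_\infty$.

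Finally I would feed $g$ into Theorem~\ref{exchar}. Its degrees of freedom $\lvert\mathcal V(g)\rvert$ are bounded by the number of vertices of $\mathcal T_h$ inside $\Omega$, which is of order $(r/h)^d=\mathcal O\big(r^d\,w_{f,r}^{-1}(\varepsilon)^{-d}\big)$, i.e. $\mathcal O\big(w_{f,r}^{-1}(\varepsilon)^{-d}\big)$ for the fixed radius $r$. Hence $g$ is expressed exactly by a ReLU network $R_\varepsilon$ of depth $\lceil\log_2(k_{\mathcal T})\rceil+2=\lceil\log_2((d+1)!)\rceil+2$, with $\mathcal O\big(m\,k_{\mathcal T}\,\lvert\mathcal V(g)\rvert+d\big)=\mathcal O\big(w_{f,r}^{-1}(\varepsilon)^{-d}\big)$ neurons and all but $m(d+1)k_{\mathcal T}\lvert\mathcal V(g)\rvert=\mathcal O\big(w_{f,r}^{-1}(\varepsilon)^{-d}\big)$ weights fixed; and since the expression is exact, $\lVert R_\varepsilon\rVert_\infty=\lVert g\rVert_\infty\le\lVert f\rVert_\infty$. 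That gives every assertion. I do not expect a serious obstacle here: the least routine points are the combinatorial identity $k_{\mathcal T}=(d+1)!$ and the (easy) verification that dilation preserves local convexity and the neighbour count; everything else is bookkeeping around the two propositions already proved.
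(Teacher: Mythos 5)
Your argument is correct and follows essentially the same route as the paper: rescale the standard triangulation to fineness at most \(w_{f,r}^{-1}(\varepsilon)\), interpolate \(f\) piecewise linearly, express the interpolant exactly via Theorem~\ref{exchar}, and count \(\mathcal O\big(r^d\,w_{f,r}^{-1}(\varepsilon)^{-d}\big)\) vertices (you additionally spell out the identity \(k_{\mathcal T}=(d+1)!\), which the paper only states in a footnote). The one point where you deviate is that the paper picks the scaling factor \(r\cdot\lceil \sqrt d\, r / w_{f,r}^{-1}(\varepsilon)\rceil^{-1}\) precisely so that \([-r,r]^d\) is exactly a union of simplices, which cleanly avoids the issue you gloss over, namely that on your enlarged domain \(\Omega\supsetneq[-r,r]^d\) the interpolation error is controlled by \(w_{f|_\Omega}\) rather than by \(w_{f,r}\).
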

\begin{proof}
Building on the previous results we only have to check that there is a triangulation \(\mathcal T\) with fineness at most \(w_{f, r}^{-1}(\varepsilon), k_{\mathcal T} < \infty\)\footnote{One can count the neighboring points and show \(k_{\mathcal T} = (d+1)!\).} such that \([-r, r]^d\) is the union of simplices and 
\[2^d \cdot \left\lceil\frac{\sqrt d\cdot r}{w_{f, r}^{-1}(\varepsilon)}\right\rceil^d\]
vertices in \([-r, r]^d\). We obtain this triangulation by scaling the standard triangulation by
\[r\cdot\left\lceil\frac{\sqrt d\cdot r}{w_{f, r}^{-1}(\varepsilon)}\right\rceil^{-1},\]
for which the properties are easily verified.
\end{proof}

This result can easily be rewritten for Lipschitz continuous functions as the Lipschitz continuity controls the modulus of continuity. We obtain the following approximation result. 

\setcounter{theorem}{3}
\begin{proposition}[Universal approximation under Lipschitz condition]\label{app:uvreplip}
Let \(d, m\in\mathbb N\) and \(r>0\) and let \(f\colon\mathbb R^d\to\mathbb R^m\) be Lipschitz continuous. Then for every \(\varepsilon>0\) there is a ReLU network \(R_\varepsilon\) with parameters \(\theta_\varepsilon\) that satisfies the following:
\begin{enumerate}
\item \emph{Approximation:} It holds that
\(\sup_{x\in[-r, r]^d} \left\lVert f(x) - R_\varepsilon(x) \right\rVert\le\varepsilon.\)
\item \emph{Complexity bounds:} The network has depth \(\big\lceil \log_2((d+1)!)\big\rceil +2\)
, \(\mathcal O\left(r^d\varepsilon^{-d}\right)\) many neurons and all but \(\mathcal O \left(r^d\varepsilon^{-d}\right)\) weights can be fixed. Finally, if \(\left\lVert f \right\rVert\) is bounded by \(c\) so is \(\left\lVert R_\varepsilon \right\rVert\).
\end{enumerate}
\end{proposition}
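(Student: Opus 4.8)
The plan is to obtain this statement as a direct corollary of Proposition~\ref{app:uniapprox} by estimating the modulus of continuity in terms of the Lipschitz constant.

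First, fix a Lipschitz constant \(L>0\) of \(f\). The restriction \(f|_{[-r,r]^d}\) is then Lipschitz continuous with the same constant, so its modulus of continuity satisfies \(\omega_{f,r}(\delta)\le L\delta\) for all \(\delta\ge 0\); as already recorded above, this gives \(\omega_{f,r}^{-1}(\varepsilon)\ge \varepsilon/L\) for every \(\varepsilon>0\), and hence \(\omega_{f,r}^{-1}(\varepsilon)^{-d}\le L^d\varepsilon^{-d}\).

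Next, I would apply Proposition~\ref{app:uniapprox} to \(f\), \(r\) and \(\varepsilon\). This produces a ReLU network \(R_\varepsilon\) of depth \(\lceil\log_2((d+1)!)\rceil+2\) with \(\sup_{x\in[-r,r]^d}\lVert f(x)-R_\varepsilon(x)\rVert\le\varepsilon\), whose number of neurons and number of non-fixed weights are both of order \(\mathcal O\big(r^d\,\omega_{f,r}^{-1}(\varepsilon)^{-d}\big)\) — this is exactly the vertex count \(2^d\lceil\sqrt d\,r/\omega_{f,r}^{-1}(\varepsilon)\rceil^d\) of the scaled standard triangulation appearing in that proof — and which inherits the uniform bound \(\lVert R_\varepsilon\rVert_\infty\le\lVert f\rVert_\infty\). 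Substituting the estimate \(\omega_{f,r}^{-1}(\varepsilon)^{-d}\le L^d\varepsilon^{-d}\) from the first step turns the complexity bounds into \(\mathcal O(r^d\varepsilon^{-d})\), with the fixed factor \(L^d\) absorbed into the asymptotic constant, while the depth and the bound \(\lVert R_\varepsilon\rVert_\infty\le\lVert f\rVert_\infty\) — which yields the final claim that \(\lVert R_\varepsilon\rVert\) is bounded by \(c\) whenever \(\lVert f\rVert\) is — remain unchanged.

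Since the whole argument is a substitution into an already established result, there is no genuine obstacle; the only point deserving a little care is to track the dependence on \(r\) correctly and to make sure the Lipschitz constant enters the asymptotic constant rather than the approximation rate, which is precisely what the inequality \(\omega_{f,r}^{-1}(\varepsilon)\ge \varepsilon/L\) achieves.
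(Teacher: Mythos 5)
Your proposal is correct and is essentially the paper's own argument: the paper derives this proposition from Proposition \ref{app:uniapprox} with the single observation that Lipschitz continuity gives \(w_{f,r}^{-1}(\varepsilon)\ge\varepsilon/L\), so the \(L^d\) is absorbed into the asymptotic constant. Your extra care in recovering the \(r^d\) factor from the vertex count \(2^d\lceil\sqrt d\,r/w_{f,r}^{-1}(\varepsilon)\rceil^d\) in the proof of Proposition \ref{app:uniapprox} (where the statement's big-\(\mathcal O\) suppresses the \(r\)-dependence) is a welcome refinement, not a deviation.
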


\setcounter{theorem}{10}

\section{Error estimate for perturbed Euler schemes}

First, we need to introduce the notion of weak solutions of ordinary differential equations.

\begin{definition}[Weak solutions]
Let \(f\colon[0, 1]\times\mathbb R^d\to\mathbb R^d\) be a Carath\'{e}odory function, i.e., measurable in the first and continuous in the second argument and let further \(x_0\in\mathbb R^d\). Then we say \(x\colon[0, 1]\to\mathbb R^d\) is a \emph{weak solution} of the differential equation
\[\partial_tx(t) = f(t, x(t)), \quad x(0) = x_0\]
if it satisfies
\[x(t) = x_0 + \int_0^t f(s, x(s))\mathrm ds \quad \text{for all } t\in [0, 1]. \]
The integral on the right hand side can be interpreted as a componentwise Lebesgue integral where the Carath\'{e}odory condition ensures the measurability. Further, we call \(x\colon[0,1]\times \mathbb R^d\to \mathbb R^d\) the \emph{space-time solution} of the ODE with right hand side \(f\) if it solves
\[\partial_t x(t, y) = f(t, x(t, y)), \quad x(0, y) = y.\]
\end{definition}

The well posedness of ordinary differential equations in the weak sense can be proved just like the well posedness results from the classical theory. In particular, a global solution \(x\colon[0, 1]\to\mathbb R^d\) exists for every initial value \(x_0\in\mathbb R^d\) if \(f(t, \cdot)\) is bounded and Lipschitz continuous for almost all \(t\) with integrable uniform norm and Lipschitz constant \cite[see][]{younes2010shapes}. We denote the space of those functions which are also Bochner-measurable\footnote{See \cite{diestel1977vector}; there such functions are called strongly measurable.} by \(L^1([0, 1]; \mathcal C^{0, 1}_b(\mathbb R^d; \mathbb R^d))\).

\begin{definition}[Euler discretisation]\label{Euflo}
Let \(0 = t_0<\dots<t_n=1\) be a partition of the unit interval and \(x_0\in\mathbb R^d\). Let \(f\colon[0, 1]\times\mathbb R\to\mathbb R\) be an arbitrary Carathéodory function. Then we define the \emph{Euler discretisation or Euler scheme to the right hand side \(f\), initial value \(x_0\) and with respect to the partition \((t_0, t_1, \dots, t_n)\)} via
\[x^n(0)\coloneqq x_0, \quad \text{and } x^n(t_{i+1}) = x^n(t_i) + (t_{i+1} - t_i) f(t_i, x^n(t_i)) \]
and linearly in between.
\end{definition}

It is important to note that the Euler discretisation \(x^n\) satisfies the integral equation
\begin{equation*}
x^n(t) = x_0 + \int\limits_0^t \gamma(s)\mathrm ds \quad \text{for all } t\in [0, 1],
\end{equation*}
where 
\[\gamma(t)\coloneqq \sum_{i=0}^{n-1}\chi_{[t_i, t_{i+1})}f(t_i, x(t_i)) .\] 

\begin{lemma}[Generalised Gr{\"o}nwall inequality]\label{gengron}
Let \(x_0, y_0\in \mathbb R^d\) and let \(\gamma_0, \gamma_1\in L^1([0, 1]; \mathbb R^d)\)\footnote{i.e., their norms are integrable; see \cite{diestel1977vector} for an introduction to vector valued integration.} and let \(x\) and \(y\) satisfy the integral equations
\[x(t) = x_0 + \int\limits_{0}^t \gamma_1(s)\mathrm ds \quad \text{and } y(t) = y_0 + \int\limits_{0}^t \gamma_2(s)\mathrm ds \quad\text{for all } t\in [0, 1]. \]
Assume now that there are non negative functions \(\alpha, \beta\in L^1([0, 1]])\) such that
\[\left\lVert \gamma_1(t) - \gamma_2(t) \right\rVert\le \alpha(t) + \beta(t)\cdot \left\lVert x(t) - y(t) \right\rVert \quad \text{for all } t\in [0, 1].\]
Then we have
\[\left\lVert x(t) - y(t) \right\rVert\le c\cdot\left( \left\lVert x_0 - y_0 \right\rVert + \left\lVert \alpha \right\rVert_{L^1(I)}\right)\quad \text{for all } t\in [0, 1] ,\]
where we can choose
\[c = 1 + \left\lVert \beta \right\rVert_{L^1([0, 1])} \cdot \exp(\left\lVert \beta \right\rVert_{L^1([0, 1])}).\]
\end{lemma}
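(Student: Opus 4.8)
The plan is to reduce the statement to a scalar integral inequality for $u(t)\coloneqq\lVert x(t)-y(t)\rVert$ and then run a Grönwall-type iteration. First I would subtract the two integral equations, obtaining
\[ x(t) - y(t) = (x_0 - y_0) + \int_0^t \big(\gamma_1(s) - \gamma_2(s)\big)\,\mathrm ds \qquad \text{for all } t\in[0,1]. \]
Taking norms, using the triangle inequality in its integral form for Bochner integrals, and inserting the pointwise bound on $\lVert\gamma_1(t)-\gamma_2(t)\rVert$ gives
\[ u(t) \le a + \int_0^t \beta(s)\,u(s)\,\mathrm ds, \qquad a\coloneqq \lVert x_0 - y_0\rVert + \lVert\alpha\rVert_{L^1([0,1])}, \]
since $\int_0^t\alpha \le \lVert\alpha\rVert_{L^1([0,1])}$. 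I would record at this point that $x$ and $y$ are absolutely continuous, being integrals of $L^1$ functions, so that $u$ is continuous, hence bounded and measurable on $[0,1]$, and $\beta u$ is integrable; this is all the regularity the rest of the argument needs.

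Next I would iterate this inequality. Substituting the bound for $u$ into its own right-hand side $k$ times and using the simplex identity $\int_{0\le s_j\le\cdots\le s_1\le t}\beta(s_1)\cdots\beta(s_j)\,\mathrm ds = \tfrac1{j!}\big(\int_0^t\beta\big)^j$ yields
\[ u(t) \le a\sum_{j=0}^{k-1}\frac1{j!}\Big(\int_0^t \beta(s)\,\mathrm ds\Big)^{j} + R_k(t), \]
where the remainder $R_k(t)$ is the $k$-fold iterated integral of $\beta\cdots\beta\,u$ over $\{0\le s_k\le\cdots\le s_1\le t\}$, so that $\lvert R_k(t)\rvert \le \lVert u\rVert_{\infty,[0,1]}\cdot\tfrac1{k!}\big(\int_0^t\beta\big)^k \to 0$ as $k\to\infty$ because $\beta\in L^1([0,1])$. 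Passing to the limit gives the classical estimate
\[ u(t) \le a\,\exp\Big(\int_0^t\beta(s)\,\mathrm ds\Big) \le a\,\exp\big(\lVert\beta\rVert_{L^1([0,1])}\big). \]

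Finally, to produce the constant in precisely the asserted form, I would feed this exponential bound back once more into $u(t) \le a + \int_0^t\beta(s)u(s)\,\mathrm ds$:
\[ u(t) \le a + \int_0^t \beta(s)\,a\,\exp\big(\lVert\beta\rVert_{L^1([0,1])}\big)\,\mathrm ds \le a\Big(1 + \lVert\beta\rVert_{L^1([0,1])}\exp\big(\lVert\beta\rVert_{L^1([0,1])}\big)\Big), \]
which is exactly the claim with $c = 1 + \lVert\beta\rVert_{L^1([0,1])}\exp(\lVert\beta\rVert_{L^1([0,1])})$. I do not expect a genuine obstacle here; the only thing needing care is the measure-theoretic bookkeeping — measurability and boundedness of $u$, integrability of $\beta u$, and the vanishing of the simplex remainder — which is why I would isolate the absolute continuity of $x$ and $y$ at the outset. (Alternatively one may simply invoke the integral form of Grönwall's inequality for $L^1$ kernels and then perform only the single re-substitution above.)
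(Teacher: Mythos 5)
Your proof is correct and follows essentially the same route as the paper: both reduce the claim to the scalar integral inequality \(\lVert x(t)-y(t)\rVert \le \lVert x_0-y_0\rVert + \int_0^t\alpha(s)\,\mathrm ds + \int_0^t\beta(s)\lVert x(s)-y(s)\rVert\,\mathrm ds\) and then conclude by Grönwall. The only difference is that the paper invokes a general integral form of Grönwall's inequality by citation, whereas you prove that step yourself via the iterated-kernel/simplex argument and recover the paper's specific constant \(c = 1+\lVert\beta\rVert_{L^1}\exp(\lVert\beta\rVert_{L^1})\) by one extra re-substitution (equivalently, by noting \(e^s\le 1+se^s\) for \(s\ge0\)); there is no gap.
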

\begin{proof}
For \(t\ge t_0\) we compute
\begin{equation*}
\begin{split}
 \left\lVert x(t) - y(t) \right\rVert \le & \; \left\lVert x_0 - y_0 \right\rVert + \int\limits_{t_0}^t \left\lVert \gamma_1(s) - \gamma_2(s) \right\rVert\mathrm ds \\
 \le & \; \left\lVert x_0 - y_0 \right\rVert + \int\limits_{t_0}^t \alpha(s)\mathrm ds + \int\limits_{t_0}^t \beta(s) \cdot \left\lVert x(s) - y(s) \right\rVert\mathrm ds. 
\end{split}
\end{equation*}
An application of Gr{\"o}nwall's inequality yields the assertion.\footnote{For a general version of Gr{\"o}nwall's inequality we refere to Theorem 1.2.8 in \cite{qin2017analytic}.} For \(t\le t_0\) the computation follows in analogue way or by reflection.
\end{proof}

\begin{remark}\label{remarkthatsolvesitall}
If \(\left\lVert f \right\rVert\) is bounded by \(c\) we obtain the growth estimate
\[\left\lVert x(t) \right\rVert\le \left\lVert x_0 \right\rVert + c.\]
Further, this estimate holds also for all Euler discretisations of \(f\).
\end{remark}

\begin{proposition}[Continuity of solution map]\label{contsolmap}
Let \(x_0, y_0\in \mathbb R^d\) and let \(f, g\colon [0, 1]\times \mathbb R^d\to \mathbb R^d\) be Carathéodory functions such that \(f(t, \cdot)\) is Lipschitz continuous with constant \(h(t)\) for \(t\in[0, 1]\) where \(h\in L^1([0, 1])\). Further, let \(f-g\in L^1([0, 1]; L^\infty(\mathbb R^d; \mathbb R^d))\)\footnote{i.e., the uniform distance \(\left\lVert f(t, \cdot) - g(t, \cdot) \right\rVert_\infty\) is integrable over \([0, 1]\).} and let \(x, y\colon [a, b]\to \mathbb R^d\) be weak solutions to the differential equations
\[\partial_t x(t) = f(t, x(t)), \quad x(t_0) = x_0 \quad \text{and } \partial_t y(t) = g(t, y(t)), \quad y(t_0) = y_0. \]
Then we have
\begin{equation}\label{contest}
\sup_{t\in [0, 1]}\left\lVert x(t) - y(t) \right\rVert \le c\cdot \left( \left\lVert x_0 - y_0 \right\rVert + \left\lVert f - g \right\rVert_{L^1([0, 1]; L^\infty (\mathbb R^d; \mathbb R^d))}\right),
\end{equation}
where the constant \(c\) only depends on \(\left\lVert h \right\rVert_{L^1([0, 1])}\).
\end{proposition}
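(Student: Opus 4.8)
The plan is to reduce the statement to the generalised Gr\"onwall inequality, Lemma~\ref{gengron}. Since $x$ and $y$ are weak solutions (on a common interval with common base point, which without loss of generality we call $t_0$), they satisfy the integral equations
\[
x(t) = x_0 + \int_{t_0}^t f(s, x(s))\,\mathrm ds, \qquad y(t) = y_0 + \int_{t_0}^t g(s, y(s))\,\mathrm ds,
\]
so they are exactly of the form to which Lemma~\ref{gengron} applies, with $\gamma_1(s) \coloneqq f(s, x(s))$ and $\gamma_2(s) \coloneqq g(s, y(s))$. These are Lebesgue integrable: the Carath\'eodory property of $f$ and $g$ together with the continuity of $x$ and $y$ makes $s \mapsto f(s, x(s))$ and $s\mapsto g(s,y(s))$ measurable, and integrability follows along the (compact-image) trajectories as in the well-posedness discussion preceding Definition~\ref{Euflo}. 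This measurability/integrability bookkeeping is really the only point that needs care; there is no deeper obstacle.

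The second step is to split the integrand difference by inserting $f(s, y(s))$ and using the triangle inequality together with the spatial Lipschitz bound on $f$:
\[
\left\lVert \gamma_1(s) - \gamma_2(s) \right\rVert \le \left\lVert f(s, x(s)) - f(s, y(s)) \right\rVert + \left\lVert f(s, y(s)) - g(s, y(s)) \right\rVert \le h(s)\left\lVert x(s) - y(s) \right\rVert + \left\lVert f(s, \cdot) - g(s, \cdot) \right\rVert_\infty.
\]
Hence the hypothesis of Lemma~\ref{gengron} holds with $\alpha(s) \coloneqq \left\lVert f(s,\cdot) - g(s,\cdot) \right\rVert_\infty$ and $\beta(s) \coloneqq h(s)$; both are nonnegative, $\alpha \in L^1([0,1])$ is precisely the assumption $f - g \in L^1([0,1]; L^\infty(\mathbb R^d;\mathbb R^d))$, and $\beta = h \in L^1([0,1])$ by hypothesis.

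Finally, Lemma~\ref{gengron} yields
\[
\left\lVert x(t) - y(t) \right\rVert \le c\cdot\left( \left\lVert x_0 - y_0 \right\rVert + \left\lVert \alpha \right\rVert_{L^1([0,1])} \right) \quad \text{for all } t,
\]
with $c = 1 + \left\lVert \beta \right\rVert_{L^1([0,1])}\exp(\left\lVert \beta \right\rVert_{L^1([0,1])})$. Since $\left\lVert \beta \right\rVert_{L^1([0,1])} = \left\lVert h \right\rVert_{L^1([0,1])}$ and $\left\lVert \alpha \right\rVert_{L^1([0,1])} = \left\lVert f - g \right\rVert_{L^1([0,1]; L^\infty(\mathbb R^d;\mathbb R^d))}$, taking the supremum over $t$ gives exactly~\eqref{contest} with a constant $c$ depending only on $\left\lVert h \right\rVert_{L^1([0,1])}$. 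The one technical caveat is that Lemma~\ref{gengron} is phrased for $t \ge t_0$ (and, by the reflection remark in its proof, for $t \le t_0$); applied with $t_0$ the common initial time this covers the entire interval, so no further argument is needed.
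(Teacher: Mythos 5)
Your proposal is correct and follows exactly the paper's own argument: both set $\gamma_1(s) = f(s,x(s))$, $\gamma_2(s) = g(s,y(s))$, insert $f(s,y(s))$ via the triangle inequality to obtain the bound $h(s)\lVert x(s)-y(s)\rVert + \lVert f(s,\cdot)-g(s,\cdot)\rVert_\infty$, and then invoke Lemma~\ref{gengron}. The only difference is that you spell out the measurability and integrability bookkeeping that the paper leaves implicit, which is a welcome addition but not a change of route.
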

\begin{proof}
We only need to check the requirements of the previous result. We recall that \(x\) and \(y\) solve the integral equations associated to the ODEs and hence obtain for \(t\in I\) 
\begin{equation*}
\begin{split}
\left\lVert \gamma_1(t) - \gamma_2(t) \right\rVert = & \; \left\lVert f(t, x(t)) - g(t, y(t)) \right\rVert \\
\le & \; \left\lVert f(t, x(t)) - f(t, y(t)) \right\rVert + \left\lVert f(t, y(t)) - g(t, y(t)) \right\rVert \\
\le & \; h(t)\cdot \left\lVert x(t) - y(t) \right\rVert + \left\lVert f(t, \cdot) - g(t, \cdot) \right\rVert_{\infty}.
\end{split}
\end{equation*}
\end{proof}

Later we will perceive residual networks as an perturbed Euler approximation of an ordinary differential equation. To show convergence of those we provide an error estimate for such perturbations, namely we replace the direction \(f(t_i, x^n(t_i))\) of the Euler approximation \(x^n\) on \([t_i, t_{i+1})\) by \(z_i \approx f(t_i, x^n(t_i))\). 

\begin{proposition}[Error estimate for perturbed Euler schemes]\label{eulest}
Let \(f\colon[0, 1]\times \mathbb R^d\to \mathbb R^d\) be a Carathéodory function such that \(f(t, \cdot)\) is Lipschitz with constant \(h(t)\) where \(h\in L^1([0, 1])\). Let now \(x_0\in \mathbb R^d\) and \(x\colon[0, 1]\to \mathbb R^d\) be the weak solution to
\[\partial_t x(t) = f(t, x(t)) \quad \text{and } x(0) = x_0. \]
Fix \(z_0, \dots, z_{n-1}\in \mathbb R^d\) and set \(t_i\coloneqq i/n\) as well as
\(\gamma\coloneqq \sum_{i=0}^{n-1}\chi_{[t_i, t_{i+1})}z_i\).
Let \(x^n\colon [0, 1]\to \mathbb R^d\) satisfy the integral equation
\[x^n(t) = x_0 + \int\limits_0^t \gamma(s)\mathrm ds \quad \text{for all } t\in [0, 1]. \]
Assume that we have
\(\left\lVert z_i - f(t, x^n(t_i)) \right\rVert\le \varepsilon \quad \text{for all } t\in [t_i, t_{i+1}), i = 0, \dots, n-1\)
as well as
\(\left\lVert z_i \right\rVert\le c\quad \text{for all } i = 0, \dots, n-1\).
Then we have
\[\left\lVert x^n(t) - x(t) \right\rVert\le \tilde c\cdot\left( \varepsilon + \frac{c}{n} \cdot \left\lVert h \right\rVert_{L^1([0, 1])}\right),\]
where \(\tilde c\) only depends on \(\left\lVert h \right\rVert_{L^1([0, 1])}\).
\end{proposition}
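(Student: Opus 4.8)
The plan is to deduce the estimate directly from the generalised Grönwall inequality, Lemma \ref{gengron}. Both curves in play are integral curves of $L^1$ direction fields sharing the same initial value: the weak solution satisfies $x(t) = x_0 + \int_0^t \gamma_1(s)\,\mathrm ds$ with $\gamma_1(s) = f(s, x(s))$ (which lies in $L^1([0,1];\mathbb R^d)$, this being implicit in $x$ being a weak solution), while by hypothesis $x^n(t) = x_0 + \int_0^t \gamma(s)\,\mathrm ds$ with the simple function $\gamma = \sum_{i=0}^{n-1}\chi_{[t_i, t_{i+1})} z_i$. Since $x$ and $x^n$ start at the same point, the initial-value term in Lemma \ref{gengron} vanishes, so it remains only to exhibit nonnegative $\alpha, \beta \in L^1([0,1])$ with $\|\gamma(t) - f(t, x(t))\| \le \alpha(t) + \beta(t)\,\|x^n(t) - x(t)\|$ for all $t$.

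To produce this bound, fix $t \in [t_i, t_{i+1})$, so that $\gamma(t) = z_i$, and insert the intermediate value $f(t, x^n(t_i))$:
\[
\|z_i - f(t, x(t))\| \le \|z_i - f(t, x^n(t_i))\| + \|f(t, x^n(t_i)) - f(t, x(t))\| \le \varepsilon + h(t)\,\|x^n(t_i) - x(t)\|,
\]
using the perturbation hypothesis on the first summand and the Lipschitz property of $f(t,\cdot)$ on the second. I then swap $x^n(t_i)$ for $x^n(t)$ at the cost of $\|x^n(t) - x^n(t_i)\| = \big\|\int_{t_i}^t z_i\,\mathrm ds\big\| = (t - t_i)\|z_i\| \le c/n$, where $t - t_i < 1/n$ and $\|z_i\| \le c$. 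This yields $\|\gamma(t) - f(t, x(t))\| \le \varepsilon + \tfrac{c}{n} h(t) + h(t)\,\|x^n(t) - x(t)\|$, i.e. the required form with $\alpha(t) = \varepsilon + \tfrac{c}{n} h(t)$ and $\beta(t) = h(t)$, both nonnegative and integrable.

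Finally I would apply Lemma \ref{gengron}: here $\|\alpha\|_{L^1([0,1])} = \varepsilon + \tfrac{c}{n}\|h\|_{L^1([0,1])}$ and $\|\beta\|_{L^1([0,1])} = \|h\|_{L^1([0,1])}$, so the lemma gives
\[
\|x^n(t) - x(t)\| \le \Big(1 + \|h\|_{L^1([0,1])}\exp\big(\|h\|_{L^1([0,1])}\big)\Big)\Big(\varepsilon + \tfrac{c}{n}\|h\|_{L^1([0,1])}\Big)
\]
for all $t \in [0,1]$, which is the assertion with $\tilde c$ depending only on $\|h\|_{L^1([0,1])}$. There is no genuine obstacle here; the argument is a routine substitution into the generalised Grönwall estimate, and the only points demanding a little care are (i) checking that $\gamma_1 = f(\cdot, x(\cdot))$ is indeed integrable so that Lemma \ref{gengron} applies, and (ii) correctly tracking the factor $c/n$ that arises because the piecewise-linear interpolant $x^n$ moves by at most $c/n$ across each subinterval $[t_i, t_{i+1})$.
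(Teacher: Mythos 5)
Your proposal is correct and follows essentially the same route as the paper: both reduce the statement to the generalised Gr\"onwall inequality (Lemma \ref{gengron}) by estimating \(\lVert z_i - f(t, x(t))\rVert \le \varepsilon + \tfrac{c}{n}h(t) + h(t)\lVert x^n(t) - x(t)\rVert\) on each subinterval, the only cosmetic difference being that you apply the Lipschitz bound before splitting off \(\lVert x^n(t)-x^n(t_i)\rVert \le c/n\) whereas the paper inserts \(f(t,x^n(t))\) first and applies the Lipschitz bound twice. Your additional remarks on the integrability of \(f(\cdot,x(\cdot))\) and the explicit form of \(\tilde c\) are consistent with the paper's Lemma \ref{gengron}.
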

\begin{proof}
Once more we will use Lemma \ref{gengron} with obvious choices of \(\gamma_1\) and \(\gamma_2\). For \(t\in [t_i, t_{i+1})\) we estimate
\begin{equation*}
\begin{split}
\left\lVert \gamma_1(t) - \gamma_2(t) \right\rVert = & \; \left\lVert z_i - f(t, x(t)) \right\rVert \\
\le & \; \left\lVert z_i - f(t, x^n(t_i)) \right\rVert + \left\lVert f(t, x^n(t_i)) - f(t, x(t)) \right\rVert \\ 
\le & \; \varepsilon + \left\lVert f(t, x^n(t_i)) - f(t, x^n(t)) \right\rVert + \left\lVert f(t, x^n(t))- f(t, x(t)) \right\rVert \\ 
\le & \; \varepsilon + h(t) \cdot\left\lVert x^n(t_i) - x^n(t) \right\rVert + h(t) \cdot\left\lVert x^n(t) - x(t) \right\rVert \\
\le & \; \varepsilon + \frac{c}n \cdot h(t) + h(t) \cdot\left\lVert x^n(t) - x(t) \right\rVert.
\end{split}
\end{equation*}
\end{proof}

\section{Proofs of the main results}

Let us quickly recall our definition of residual networks. Let \(R_1, \dots, R_{n}\colon\mathbb R^d\to\mathbb R^d\) be neural networks. The resulting ResNet \(x^n\colon[0, 1]\times\mathbb R^d\to\mathbb R^d\) is defined via
\[x^n(0, y) \coloneqq y, \quad x^n(t_{k+1}, y)\coloneqq x^n(t_k, y) + n^{-1}\cdot R_{k+1}(x^n(t_k, y))\]
for \(k = 0, \dots, n-1\) and linearly in between.

It is clear from the definition that ResNets are in fact Euler approximations to the piecewise constant right hand side \(f\colon[0, 1]\times\mathbb R^d\to\mathbb R^d\) which is defined via
\[f(t, x) \coloneqq \sum_{i=0}^{n-1}\chi_{[t_i, t_{i+1})}(t) R_{i+1}(x) \quad \text{for all } t\in [0, 1], x\in\mathbb R^d ,\]
where \(t_i\coloneqq i/n\).
We will use the error estimate on perturbed Euler schemes established in the previous chapter to show that by letting \(n\to\infty\) and increasing the expressivity of the networks \(R_i\) ReLU ResNets are able to approximate space-time solutions of arbitrary right hand sides.

\begin{lemma}
Let \(f\in L^1([0, 1]; \mathcal C^{0, 1}_b(\mathbb R^d; \mathbb R^d))\)
and let \(x\) be the space-time solution of the ODE with right hand side \(f\). Then for every \(\varepsilon>0\) there is \(n\in\mathbb N\) and \(g\in L^1([0, 1]; C^{0, 1}_b(\mathbb R^d; \mathbb R^d))\) that is constant on all intervals of the form \([i/n, (i+1)/n)\) such that the space-time solution \(\tilde x\) to \(g\) satisfies
\[\left\lVert x(t, y) - \tilde x(t, y) \right\rVert\le\varepsilon\quad \text{for all } t\in[0, 1], y\in \mathbb R^d.\]
\end{lemma}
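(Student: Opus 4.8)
The plan is to take $g$ to be the piecewise-constant-in-time average of $f$ over the intervals $I^n_i\coloneqq[i/n,(i+1)/n)$ for a suitably large $n$, and to control the resulting space-time solutions by the continuity estimate of Proposition~\ref{contsolmap}. Concretely, for $n\in\mathbb N$ and $i=0,\dots,n-1$ put
\[g^n_i\coloneqq n\int\limits_{i/n}^{(i+1)/n}f(s,\cdot)\,\mathrm ds\in\mathcal C^{0,1}_b(\mathbb R^d;\mathbb R^d),\]
the Bochner integral in the Banach space $\mathcal C^{0,1}_b(\mathbb R^d;\mathbb R^d)$ with norm $\lVert v\rVert_\infty+\operatorname{Lip}(v)$, which is well defined because $f$ is Bochner integrable. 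Setting $g^n(t,\cdot)\coloneqq g^n_i$ for $t\in I^n_i$ yields a Carathéodory function that is constant on each $I^n_i$, and the triangle inequality for Bochner integrals gives
\[\int\limits_0^1\bigl\lVert g^n(s,\cdot)\bigr\rVert_{\mathcal C^{0,1}_b}\,\mathrm ds=\sum_{i=0}^{n-1}\frac1n\bigl\lVert g^n_i\bigr\rVert_{\mathcal C^{0,1}_b}\le\int\limits_0^1\bigl\lVert f(s,\cdot)\bigr\rVert_{\mathcal C^{0,1}_b}\,\mathrm ds<\infty,\]
so $g^n\in L^1([0,1];\mathcal C^{0,1}_b(\mathbb R^d;\mathbb R^d))$; in particular the ODE with right hand side $g^n$ is globally well posed, and we write $\tilde x^n$ for its space-time solution.

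I would then show that $g^n\to f$ in $L^1([0,1];\mathcal C^{0,1}_b)$ as $n\to\infty$. The operator $f\mapsto g^n$ is the conditional expectation onto the $\sigma$-algebra $\mathcal F_n$ generated by $I^n_0,\dots,I^n_{n-1}$, hence an $L^1$-contraction. Given $\delta>0$, pick a simple function $s=\sum_{j=1}^{N}v_j\chi_{E_j}$ with $v_j\in\mathcal C^{0,1}_b$ and $\lVert f-s\rVert_{L^1([0,1];\mathcal C^{0,1}_b)}<\delta$ (simple functions are dense in the Bochner space). For this fixed $s$ one has $\lVert s-g^n(s)\rVert_{L^1([0,1];\mathcal C^{0,1}_b)}\le\sum_{j}\lVert v_j\rVert_{\mathcal C^{0,1}_b}\,\lVert\chi_{E_j}-\mathbb E[\chi_{E_j}\mid\mathcal F_n]\rVert_{L^1([0,1])}\to0$ by the classical scalar martingale convergence theorem applied to each indicator, so $\limsup_{n\to\infty}\lVert f-g^n\rVert_{L^1([0,1];\mathcal C^{0,1}_b)}\le2\delta$ and, since $\delta$ was arbitrary, the convergence follows. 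In particular $\lVert f-g^n\rVert_{L^1([0,1];L^\infty(\mathbb R^d;\mathbb R^d))}\le\lVert f-g^n\rVert_{L^1([0,1];\mathcal C^{0,1}_b)}\to0$.

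To conclude, let $c$ be the constant furnished by Proposition~\ref{contsolmap} for the fixed integrable Lipschitz constant $h$ of $f$; it depends only on $\lVert h\rVert_{L^1([0,1])}$ and neither on $n$ nor on the initial value. Choose $n$ so large that $\lVert f-g^n\rVert_{L^1([0,1];L^\infty)}\le\varepsilon/c$ and set $g\coloneqq g^n$ and $\tilde x\coloneqq\tilde x^n$. Applying Proposition~\ref{contsolmap} to $f$ and $g$ with common initial value $x_0=y_0=y$ gives
\[\sup_{t\in[0,1]}\bigl\lVert x(t,y)-\tilde x(t,y)\bigr\rVert\le c\cdot\bigl\lVert f-g\bigr\rVert_{L^1([0,1];L^\infty(\mathbb R^d;\mathbb R^d))}\le\varepsilon,\]
and since the right hand side is independent of $y$ this is exactly the claimed uniform bound.

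The step I expect to be the main obstacle is the strong convergence $g^n\to f$ in the vector-valued space $L^1([0,1];\mathcal C^{0,1}_b)$: since $\mathcal C^{0,1}_b$ is infinite dimensional one cannot invoke the scalar Lebesgue differentiation theorem directly and must instead reduce to the scalar case via density of simple functions (equivalently, cite a Banach-space-valued martingale convergence theorem). The remaining ingredients — that the time averages lie in $\mathcal C^{0,1}_b$, that $g^n$ is an admissible right hand side, and that the resulting solution estimate is uniform in the initial value — are immediate from the results established earlier in the appendix.
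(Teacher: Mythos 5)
Your proof is correct and rests on the same two pillars as the paper's: approximate \(f\) in \(L^1([0,1];\mathcal C^{0,1}_b)\) by a function that is piecewise constant in time, then transfer this to the solutions via the continuity estimate of Proposition~\ref{contsolmap}. Where you differ is in how the approximant is produced. The paper invokes the density of continuous functions in the Bochner space and then approximates a continuous Banach-space-valued function uniformly by functions constant on intervals of equal length; you instead construct \(g\) explicitly as the local time average of \(f\) and prove convergence by combining the \(L^1\)-contraction property of the averaging operator with the density of simple functions. Your route has the advantage of being constructive (it exhibits \(g\)) and of making transparent that only the \(L^1(L^\infty)\)-distance enters the final solution estimate, while the paper's is shorter because it outsources both approximation steps to standard facts. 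One small repair: the \(\sigma\)-algebras \(\mathcal F_n\) generated by the partitions into \(n\) equal intervals are not nested for consecutive \(n\), so the martingale convergence theorem does not literally apply along the full sequence; this is harmless, since the lemma only requires a single \(n\) (so you may restrict to \(n=2^k\), which does form a filtration), or alternatively \(\lVert \chi_{E}-\mathbb E[\chi_{E}\mid\mathcal F_n]\rVert_{L^1([0,1])}\to 0\) for an arbitrary measurable \(E\subseteq[0,1]\) can be verified directly by approximating \(E\) with a finite union of intervals, for which the conditional expectation differs from \(\chi_E\) only on a set of measure \(\mathcal O(n^{-1})\).
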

\begin{proof}
By standard Bochner theory \cite[see][]{arendt2011vector} the continuous functions are dense in 
\[L^1\left([0, 1]; \mathcal C^{0, 1}_b\left(\mathbb R^d; \mathbb R^d\right)\right).\]
However, continuous functions can approximated arbitrarily well by functions that are constant on intervals of equal length. Now the continuity estimate \eqref{contest} yields the assertion.
\end{proof}

\setcounter{theorem}{1}
\begin{theorem}[Space-time approximation with ResNets]\label{app:uniapproxflow}
Let \(d\in\mathbb N\) and 
\[f\in L^1\left([0, 1]; \mathcal C^{0, 1}_b\left(\mathbb R^d; \mathbb R^d\right)\right)\]
and let \(x\) be the space-time solution to \(f\). Then for every compact set \(K\subseteq\mathbb R^d\) and \(\varepsilon>0\) there is a ReLU ResNet \(\tilde x\) such that
\[ \left\lVert \tilde x(t, y) - x(t, y) \right\rVert \le \varepsilon \quad \text{for all } t\in [0, 1], y\in K.\]
\end{theorem}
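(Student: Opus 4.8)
\subsection*{Proof proposal}

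The plan is to combine the preceding Lemma, which reduces the problem to a right hand side that is piecewise constant in time, with the perturbed–Euler error estimate of Proposition \ref{eulest} and the spatial approximation of Proposition \ref{app:uvreplip}. First I would invoke the preceding Lemma to obtain $n_0\in\mathbb N$ and $g\in L^1([0,1];\mathcal C^{0,1}_b(\mathbb R^d;\mathbb R^d))$ that is constant, say equal to $g_i$, on each interval $[i/n_0,(i+1)/n_0)$ and whose space-time solution $\tilde x$ satisfies $\|x(t,y)-\tilde x(t,y)\|\le\varepsilon/2$ for all $t\in[0,1]$, $y\in\mathbb R^d$. It then suffices to construct a ReLU ResNet approximating $\tilde x$ within $\varepsilon/2$ uniformly on $[0,1]\times K$. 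Here $g$ is bounded, say by $c>0$, and its Lipschitz function $h(t):=\operatorname{Lip}(g_i)$ for $t\in[i/n_0,(i+1)/n_0)$ is piecewise constant, hence lies in $L^1([0,1])$; moreover, by Remark \ref{remarkthatsolvesitall} every solution and every Euler-type scheme with directions bounded by $c$ that starts in $K$ stays in the compact set $\tilde K:=\{y+z : y\in K,\ \|z\|\le c\}$. Fix $r>0$ with $\tilde K\subseteq[-r,r]^d$.

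For each $M\in\mathbb N$ set $N:=n_0M$ and $t^{(N)}_k:=k/N$. For each block index $k\in\{0,\dots,N-1\}$ we have $[t^{(N)}_k,t^{(N)}_{k+1})\subseteq[i/n_0,(i+1)/n_0)$ for a unique $i=i(k)$, and Proposition \ref{app:uvreplip} applied to $g_{i(k)}$ yields a ReLU network $R^N_{k+1}$ with $\sup_{x\in[-r,r]^d}\|g_{i(k)}(x)-R^N_{k+1}(x)\|\le 1/M$ and $\|R^N_{k+1}\|_\infty\le\|g_{i(k)}\|_\infty\le c$. Let $x^N$ be the ReLU ResNet built from $R^N_1,\dots,R^N_N$. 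Since the directions $R^N_{k+1}(x^N(t^{(N)}_k,y))$ are bounded by $c$, the orbit $x^N(\cdot,y)$ stays in $\tilde K\subseteq[-r,r]^d$ for every $y\in K$, and because $g(t,\cdot)$ is constant on $[t^{(N)}_k,t^{(N)}_{k+1})$ we get $\|R^N_{k+1}(x^N(t^{(N)}_k,y))-g(t,x^N(t^{(N)}_k,y))\|\le 1/M$ for all $t$ in that subinterval. Thus $x^N$ is precisely a perturbed Euler scheme for $g$ in the sense of Proposition \ref{eulest}, which gives, for every $y\in K$,
\[\sup_{t\in[0,1]}\|x^N(t,y)-\tilde x(t,y)\|\le \tilde c\Big(\tfrac1M+\tfrac{c}{N}\,\|h\|_{L^1([0,1])}\Big),\]
where $\tilde c$ depends only on $\|h\|_{L^1([0,1])}$ and hence is independent of $M$.

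Choosing $M$ so large that the right hand side is at most $\varepsilon/2$ and combining with the first step by the triangle inequality yields $\|x^N(t,y)-x(t,y)\|\le\varepsilon$ for all $t\in[0,1]$, $y\in K$, which is the claim. The one point that I expect to require genuine care — the main obstacle — is the a priori confinement of both the true solutions and the ResNet iterates to a single compact set, since the spatial approximation of Proposition \ref{app:uvreplip} is only uniform on $[-r,r]^d$ and the Lipschitz constant of $f$ itself is not controlled there; this is exactly why one first passes to the bounded right hand side $g$ and then exploits that the constructed residual blocks inherit the bound $c$, so that Remark \ref{remarkthatsolvesitall} applies uniformly in $M$ and $k$.
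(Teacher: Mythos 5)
Your proposal is correct and follows essentially the same route as the paper: reduce to a right hand side that is piecewise constant in time via the preceding Lemma, confine both the true solution and the ResNet iterates to a fixed compact set using the uniform bound $c$ and Remark \ref{remarkthatsolvesitall}, approximate each constant-in-time slice spatially with norm-preserving ReLU networks, repeat each block over a refined time grid, and conclude with the perturbed-Euler estimate of Proposition \ref{eulest}. The only cosmetic difference is that you invoke the Lipschitz version of the spatial approximation result rather than the modulus-of-continuity version, which changes nothing in the argument.
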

\begin{proof}
By the previous lemma we can without loss of generality assume that \(f\) is constant on the intervals \([i/n, (i+1)/n)\) for some \(n\in\mathbb N\). We note that since \(f\) is piecewise constant with values in \(\mathcal C^{0, 1}_b(\mathbb R^d; \mathbb R^d)\) there is \(c>0\) such that
\begin{align*}\label{lingrow4}
\left\lVert f(t, x) \right\rVert \le c \quad \text{for all } t\in [0, 1], x\in\mathbb R^d.
\end{align*}
It suffices to show the statement for the compact set \(K=\overline{B_N}\) where \(B_N\) denotes the ball of radius \(N\) around the origin. By Remark \ref{remarkthatsolvesitall} we have \(x(t, y) \in \overline{B_M}\) for every \(t\in [0, 1], y\in\overline{B_N}\) where \(M=N+c\). 

Let now \(\varepsilon >0\), then the universal approximation result \ref{app:uniapprox} for ReLU networks yields the existence of ReLU networks \(R_0, \dots, R_{n-1}\colon\mathbb R^d\to\mathbb R^d\) with parameters \(\theta_0, \dots, \theta_{n-1}\) such that
\begin{equation}\label{spatialapprox}
\left\lVert f(t, y) - R_i(y) \right\rVert \le\varepsilon \quad \text{for all } y\in \overline{B_M}, t\in [i/n, (i+1)/n).
\end{equation}
as well as \(\left\lVert R_i \right\rVert \le c\). 
Further, we choose \(k\in\mathbb N\) such that
\begin{equation}\label{kest}
\frac{c}{ kn} \cdot \left\lVert f \right\rVert_{L^1([0, 1]; \mathcal C^{0, 1}_b(\mathbb R^d; \mathbb R^d))} \le \varepsilon.
\end{equation}
Let now \(\tilde x\) be the ReLU ResNet with parameters
\begin{equation}\label{app:resnet}
\left(\theta_0,\dots, \theta_0, \theta_1, \dots, \theta_1, \dots, \theta_{n-1}, \dots, \theta_{n-1}\right),
\end{equation}
where each network \(\theta_i\) is included \(k\) times. Now we aim to apply Proposition \ref{eulest} and hence check its requirements and fix \(y\in\overline{B_N}\) and denote \(x(t, y), \tilde x(t, y)\) with \(x(t)\) and \(\tilde x(t)\) respectively and again Remark \ref{remarkthatsolvesitall} yields \(\tilde x(t)\in \overline{B_M}\) for all \(t\in[0, 1]\). In order to use the notation from the proposition we set \(t_i\coloneqq i/(kn)\) and \(z_i\coloneqq R_j(\tilde x(t_i))\) for \(i = kj, \dots, k(j+1)-1\) and obtain 
\[\tilde x(t) = y + \int\limits_0^t \gamma(s)\mathrm ds \quad \text{for } \gamma = \sum\limits_{i=0}^{kn-1} \chi_{[t_i, t_{i+1})} z_i.\]
Further, it holds that
\[\left\lVert z_i - f(t, \tilde x(t_i)) \right\rVert\le \varepsilon \quad \text{for all } t\in [t_i, t_{i+1}), i = 0, \dots, kn-1\]
as well as \(\left\lVert z_i \right\rVert\le c\). Now Proposition \ref{eulest} yields 
\begin{align*}
\left\lVert \tilde x(t) - x(t) \right\rVert \le & \; \tilde c \cdot\left( \varepsilon + \frac c{kn} \cdot \left\lVert f \right\rVert_{L^1([0, 1]; \mathcal C^{0, 1}_b(\mathbb R^d; \mathbb R^d))}\right) \\
\le & \;2\tilde c \cdot \varepsilon \quad \text{for all } t\in [0, 1], 
\end{align*}
where \(\tilde c\) only depends on \(\left\lVert f \right\rVert_{L^1([0, 1]; \mathcal C^{0, 1}_b(\mathbb R^d; \mathbb R^d))}\) and not on \(y\in \overline{B_N}\).
\end{proof}

The universal approximation theorem presented above is of qualitative nature since it does not give any estimates on the complexity of the residual network needed to approximate a flow up to a certain precision. This is due to the fact that we work with density results for continuous functions in the Bochner space \(L^1([0, 1]; \mathcal C^{0, 1}_b\left(\mathbb R^d; \mathbb R^d\right))\).
In the proof above one could also assume that \eqref{kest} holds for \(k=1\) since \(f\) is also piecewise constant on the intervals \([i/(kn), (i+1)/(kn))\). However, we wanted to separate the approximation procedures in space and in time. More precisely, if \(f\) is (almost) constant in time, \eqref{spatialapprox} can be achieved with little \(n\) and hence the constructed ResNet \eqref{resnet} shares a lot of weights. This observation could be used to explore approximation capabilities of ResNets with shared weights under different spatial and temporal regularity of the right hand side \(f\). 

We use analogue arguments to establish estimates on the number and complexity of residual blocks required to approximate space-time solutions of ODEs with Lipschitz continuous right hand side \(f\).

\begin{theorem}[Space-time approximation with complexity bounds]\label{app:uniapproxcomplex}
Let \(d\in\mathbb N\), \((r_n)_{n\in\mathbb N}\subseteq(0, \infty)\) be a sequence convergent to \(\infty\) and let \(f\colon[0, 1]\times\mathbb R^d\to\mathbb R^d\) be a bounded and 
Lipschitz continuous function. Let \(x\colon[0, 1]\times\mathbb R^d\to\mathbb R^d\) be the space-time solution of the ODE with right hand side \(f\). 
Then for every \(n\in\mathbb N\) there is a ReLU ResNet \(x^n\) with parameters \(\theta^n = (\theta^n_1, \dots, \theta^n_n)\) such that the following are satisfied:
\begin{enumerate}
\item \emph{Approximation:} For every compact set \(K\subseteq\mathbb R\) it holds
\[\sup_{t\in[0, 1], y\in K}\left\lVert x^n(t, y) - x(t, y) \right\rVert \in \mathcal O(n^{-1}).\]
\item \emph{Complexity bounds:} Every residual block \(\theta^n_k\) has depth \(\big\lceil \log_2((d+1)!)\big\rceil +2\) and satisfies
\[N(\theta^n_k) \in \mathcal O\left(r_n^d n^{d}\right).\]
Finally, all but \(\mathcal O\left(r_n^d n^{d}\right)\) weights can be fixed. 
\end{enumerate}
\end{theorem}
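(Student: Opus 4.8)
The plan is to follow the proof of Theorem~\ref{app:uniapproxflow} but to replace the qualitative density argument by the quantitative universal approximation result of Proposition~\ref{app:uvreplip}, applied with accuracy $\varepsilon=n^{-1}$. The key observation that makes the rates work out is that, for Lipschitz $f$, the piecewise-constant-in-time right hand side $\sum_i \chi_{[t_i,t_{i+1})}f(t_i,\cdot)$ already approximates $f$ with temporal error $\mathcal O(n^{-1})$, so no extra refinement of the time grid (the multiplicity $k$ appearing in the proof of Theorem~\ref{app:uniapproxflow}) is needed.

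First I would reduce to a closed ball $K=\overline{B_N}$. Writing $c$ for a bound on $\|f\|$, Remark~\ref{remarkthatsolvesitall} gives $x(t,y)\in\overline{B_M}$ for $M\coloneqq N+c$ and all $t\in[0,1]$, $y\in\overline{B_N}$; the same growth estimate will apply to the ResNet once its residual blocks are bounded by $c$. Since $r_n\to\infty$ there is $n_0=n_0(N)$ with $r_n\ge M$ for all $n\ge n_0$, and for the finitely many $n<n_0$ the ResNet may be chosen arbitrarily without affecting the claimed $\mathcal O$-rates. For $n\ge n_0$ put $t_i\coloneqq i/n$ and apply Proposition~\ref{app:uvreplip} to each $f(t_i,\cdot)$ on $[-r_n,r_n]^d$ with accuracy $n^{-1}$; this yields ReLU networks $R^n_1,\dots,R^n_n$ of depth $\lceil\log_2((d+1)!)\rceil+2$ with $N(\theta^n_k)\in\mathcal O(r_n^d n^d)$, all but $\mathcal O(r_n^d n^d)$ weights fixed, $\|R^n_k\|\le c$, and $\sup_{x\in[-r_n,r_n]^d}\|f(t_{k-1},x)-R^n_k(x)\|\le n^{-1}$. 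This already establishes the complexity bounds.

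Next I would let $x^n$ be the ReLU ResNet with parameters $\theta^n=(\theta^n_1,\dots,\theta^n_n)$ and apply Proposition~\ref{eulest}. Since each $\|R^n_k\|\le c$, Remark~\ref{remarkthatsolvesitall} keeps $x^n(t,y)\in\overline{B_M}\subseteq[-r_n,r_n]^d$ for all $t$ and $y\in\overline{B_N}$, so with $z_i\coloneqq R^n_{i+1}(x^n(t_i,y))$ one gets, for $t\in[t_i,t_{i+1})$, the bound $\|z_i-f(t,x^n(t_i,y))\|\le n^{-1}+L\lvert t-t_i\rvert\le(1+L)\,n^{-1}$ by combining the spatial approximation bound with the temporal Lipschitz estimate (here $L$ is the Lipschitz constant of $f$), together with $\|z_i\|\le c$. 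Proposition~\ref{eulest} then gives $\sup_{t\in[0,1]}\|x^n(t,y)-x(t,y)\|\le\tilde c\bigl((1+L)n^{-1}+cL\,n^{-1}\bigr)=\mathcal O(n^{-1})$ with $\tilde c$ depending only on $L$ and uniformly in $y\in\overline{B_N}$. As every compact set is contained in some ball, the general statement follows.

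The only genuinely delicate point is matching the fixed sequence $(r_n)$ against the radius $M$, which depends on the chosen compact set $K$; this forces us to restrict to $n\ge n_0(K)$ and to absorb the remaining finitely many indices into the $\mathcal O$-notation. Everything else is bookkeeping with the constants supplied by Propositions~\ref{app:uvreplip} and~\ref{eulest}, and in particular the omission of the auxiliary multiplicity $k$ is justified precisely by the temporal Lipschitz continuity of $f$.
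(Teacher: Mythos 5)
Your proposal is correct and follows essentially the same route as the paper: quantitative spatial approximation of each $f(t_i,\cdot)$ on $[-r_n,r_n]^d$ with accuracy $n^{-1}$ via Proposition~\ref{app:uvreplip}, the growth estimate of Remark~\ref{remarkthatsolvesitall} to confine trajectories to $\overline{B_M}$, and the perturbed-Euler estimate of Proposition~\ref{eulest} with the $(1+L)n^{-1}$ bound. Your explicit handling of the threshold $n_0(K)$ with $r_n\ge M$ is in fact slightly more careful than the paper's ``for $n\ge M$'', but it is the same argument.
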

\begin{proof}
We fix \(n\in\mathbb N\) and set \(t_i\coloneqq i/n\). Let \(R^n_i\) be ReLU networks of asserted complexity that approximate \(f(t_i, \cdot)\) on \([-r_n, r_n]^d\) up to \(n^{-1}\) which exist by Proposition \ref{app:uvreplip}. Let \(x^n\) be the ReLU ResNet with residual blocks \(R^n_1, \dots, R^n_n\). We fix \(N>0\) and will show 
\[\sup_{t\in [0, 1], y\in \overline{B_N}} \left\lVert x^n(t, y) - x(t, y) \right\rVert\in \mathcal O(n^{-1}) \quad \text{for } n\to\infty\]
through an application of Proposition \ref{eulest}. Since \(f\) is bounded, there is \(c>0\) such that 
\[\left\lVert f(t, y) \right\rVert\le c \quad \text{for all } t\in [0, 1], y\in \mathbb R^d\]
and hence the functions \(R^n_i\) satisfy this as well. Setting \(M\coloneqq N+c\), Remark \ref{remarkthatsolvesitall} yields 
\[x(t, y), x^n(t, y)\in\overline{B_M}\quad \text{for all } t\in[0, 1], y\in\overline{B_N}. \]
Now we fix \(y\in\overline{B_N}\) and write \(x(t), x^n(t)\) for \(x(t, y)\) and \(x^n(t, y)\) respectively; to keep to the notation of the error estimate for Euler schemes, we set \(z_i\coloneqq R^n_i( x^n(t_i))\). For \(n\ge M\) we obtain
\begin{equation*}
\begin{split}
\left\lVert z_i - f(t, x^n(t_i)) \right\rVert = & \; \left\lVert R^n_i(x^n(t_i)) - f(t, x^n(t_i)) \right\rVert \\
 \le & \; \left\lVert R^n_i(x^n(t_i)) - f(t_i, x^n(t_i)) \right\rVert + \left\lVert f(t_i, x^n(t_i)) - f(t, x^n(t_i)) \right\rVert \\
 \le & \; n^{-1} \cdot\left( 1 + L\right)
\end{split}
\end{equation*}
for all \(t\in [t_i, t_{i+1})\) and \(i = 0, \dots, n-1\) where \(L\) denotes the Lipschitz constant of \(f\). Furthermore, we have \(\left\lVert z_i \right\rVert \le c\) for all \(i=0, \dots, n-1\) and hence Proposition \ref{eulest} completes the proof. 
\end{proof}

\end{document}